\title{$\alpha$-Divergence Loss Function for Neural Density Ratio Estimation}
\author{%
  Yoshiaki Kitazawa \\
  NTT DATA Mathematical Systems Inc.\\
  Data Mining Division\\
  1F Shinanomachi Rengakan, 35, Shinanomachi\\
  Shinjuku-ku, Tokyo, 160-0016, Japan\\
  \texttt{kitazawa@msi.co.jp}
}
\begin{document} 

\maketitle 

\begin{abstract} 
   Density ratio estimation (DRE) is a fundamental machine learning technique for capturing relationships between two probability distributions. State-of-the-art DRE methods estimate the density ratio using neural networks trained with loss functions derived from variational representations of $f$-divergences. 
   However, existing methods face optimization challenges, such as overfitting due to lower-unbounded loss functions, biased mini-batch gradients, vanishing training loss gradients, and high sample requirements for Kullback--Leibler (KL) divergence loss functions. 
   To address these issues, we focus on $\alpha$-divergence, which provides a suitable variational representation of $f$-divergence. 
   Subsequently, a novel loss function for DRE, the $\alpha$-divergence loss function ($\alpha$-Div), is derived. 
      $\alpha$-Div is concise but offers stable and effective optimization for DRE. 
   The boundedness of $\alpha$-divergence provides the potential for successful DRE with data exhibiting high KL-divergence. 
      Our numerical experiments demonstrate the effectiveness of $\alpha$-Div in optimization.  
   However, the experiments also show that the proposed loss function offers no significant advantage over the KL-divergence loss function in terms of RMSE for DRE. This indicates that the accuracy of DRE is 
 primarily determined by the amount of KL-divergence in the data and is less dependent on $\alpha$-divergence. 
\end{abstract}

\section{Introduction} 
Density ratio estimation (DRE), a fundamental technique in various machine learning domains, estimates the density ratio $r^*(\mathbf{x})=q(\mathbf{x})/p(\mathbf{x})$ between two probability densities using two sample sets drawn separately from $p$ and $q$. Several machine learning methods, including generative modeling \citep{goodfellow2014generative,nowozin2016f,uehara2016generative}, mutual information estimation and representation learning \citep{belghazi2018mutual,hjelm2018learning}, energy-based modeling \citep{gutmann2010noise}, and covariate shift and domain adaptation \citep{shimodaira2000improving,huang2006correcting}, involve problems where DRE is applicable. Given its potential to enhance a wide range of machine learning methods, the development of effective DRE techniques has garnered significant attention. 

Recently, neural network-based methods for DRE have achieved state-of-the-art results. These methods train neural networks as density ratio functions using loss functions derived from variational representations of $f$-divergences \citep{nguyen2010estimating}, which are equivalent to density-ratio matching under Bregman divergence \citep{sugiyama2012density}. The optimal function for a variational representation of $f$-divergence, through the Legendre transform, corresponds to the density ratio. 

However, existing neural network methods suffer from several issues. First, an overfitting phenomenon, termed \textit{train-loss hacking} by \citet{kato2021non}, occurs during optimization when lower-unbounded loss functions are used. 
Second, the gradients of loss functions over mini-batch samples provide biased estimates of the full gradient when using standard loss functions derived directly from the variational representation of $f$-divergence \citep{belghazi2018mutual}. Third, loss function gradients can vanish when the estimated probability ratios approach zero or infinity \citep{arjovsky2017towards}. Finally, optimization with a Kullback--Leibler (KL) divergence loss function often fails on high KL-divergence data because the sample requirement for optimization increases exponentially with the true amount of KL-divergence \citep{poole2019variational,song2019understanding,mcallester2020formal}. 

To address these problems, this study focuses on $\alpha$-divergence, a subgroup of $f$-divergences, which has a sample complexity independent of its ground truth value. We then present a Gibbs density representation for a variational form of the divergence to obtain unbiased mini-batch gradients, from which we derive a novel loss function for DRE, referred to as the $\alpha$-divergence loss function ($\alpha$-Div). Despite its simplicity, $\alpha$-Div offers stable and effective optimization for DRE. 

Furthermore, this study provides technical justifications for the proposed loss function. $\alpha$-Div has a sample complexity that is independent of the ground truth value of $\alpha$-divergence and provides unbiased mini-batch gradients of training losses. Additionally, choosing $\alpha$ within the interval $(0,1]$ ensures that $\alpha$-Div remains lower-bounded, preventing train-loss hacking during optimization. By selecting $\alpha$ from this interval, we also avoid vanishing gradients in neural networks when they reach extreme local minima. We empirically validate our approach through numerical experiments using toy datasets, which demonstrate the stability and efficiency of the proposed loss function during optimization. 

However, we observe that the root mean squared error (RMSE) of the estimated density ratios increases significantly for data with higher KL-divergence when using the proposed loss function. The same phenomenon is observed with the KL-divergence loss function. 
These results suggest that the accuracy of DRE is primarily determined by the magnitude of KL-divergence inherent in the data, rather than by the specific choice of $\alpha$ in the $\alpha$-divergence loss function. This observation highlights a fundamental limitation shared across different $f$-divergence-based DRE methods, emphasizing the need to consider the intrinsic KL-divergence of data when evaluating and interpreting DRE performance.

The key contributions of this study are as follows: First, we propose a novel loss function for DRE, termed $\alpha$-Div, providing a concise solution to instability and biased gradient issues present in existing $f$-divergence-based loss functions. Second, technical justifications and theoretical insights supporting the proposed $\alpha$-Div loss function are presented. Third, we empirically confirm the stability and efficiency of the proposed method through numerical experiments. Finally, our empirical results reveal that the accuracy of DRE, measured by RMSE, is primarily influenced by the magnitude of KL-divergence inherent in the data, rather than by the specific choice of $\alpha$ in the $\alpha$-divergence loss function.

\section{Problem Setup}\label{section_problem_setup} 
\textbf{Problem definition.} $P$ and $Q$ are probability distributions on  $\Omega \subset \mathbb{R}^{d}$ with unknown probability densities $p$ and $q$, respectively. 
We assume $p(\mathbf{x}) > 0 \Leftrightarrow q(\mathbf{x}) > 0$ almost everywhere $\mathbf{x} \in \Omega$, ensuring that the density ratio is well-defined on their common support.

The goal of DRE is to accurately estimate $r^*(\mathbf{x})=q(\mathbf{x})/p(\mathbf{x})$ from given i.i.d. samples $\hat{\mathbf{X}}_{P[R]}=\{\mathbf{x}_i^p \}_{i=1}^R \sim p$ and $\hat{\mathbf{X}}_{Q[S]}=\{\mathbf{x}_i^q \}_{i=1}^S \sim q$. 

\textbf{Additional notation.} 
$E_P[\cdot]$ denotes the expectation under the distribution $P$: 
$E_P[\phi(\mathbf{x})]=\int_{\Omega} \phi(\mathbf{x})dP(\mathbf{x})$,  where $\phi(\mathbf{x})$ is a measurable function over $\Omega$. 
$\hat{E}_{P[R]}[\cdot]$ denotes the empirical expectation of $\hat{\mathbf{X}}_{P[R]}$: 
$\hat{E}_{P[R]}[\phi(\mathbf{x})]=\sum_{i=1}^{R} \phi(\mathbf{x}_i^p)/R$. 
The variables in a function or the superscript variable  ``$[R]$'' of $\hat{E}_{P[R]}$ can be omitted when unnecessary and instead represented as $E_P[\phi]$ or $\hat{E}_{P}[\phi]$. 
$I(\text{cond})$ denotes the indicator function: $I(\text{cond})=1$ if ``cond'' is true, and 0 otherwise.
Similarly, notations $E_Q[\cdot]$ and $\hat{E}_{Q[S]}[\cdot]$ are defined. 
$E[\cdot]$ is written for $E_P[E_Q[\cdot]]$.

\section{DRE via \texorpdfstring{$f$}{f}-divergence variational representations and its major problems}\label{section_problems_for_dre} 
In this section, we introduce DRE using $f$-divergence variational representations and $f$-divergence loss functions. First, we review the definition of $f$-divergences. Next, we identify four major issues with existing $f$-divergence loss functions: the overfitting problem with lower-unbounded loss functions, biased mini-batch gradients, vanishing training loss gradients, and high sample requirements for Kullback--Leibler (KL) divergence loss functions. 

\subsection{DRE via \texorpdfstring{$f$}{f}-divergence variational representation} 
First, we review the definition of $f$-divergences. 

\begin{definition}[$f$-divergence]\label{Definition_f_divergence} 
    The $f$-divergence $D_{f}$ between two probability measures $P$ and $Q$, which is induced by a convex function $f$ satisfying $f(1) = 0$, is defined as 
    $D_{f}(Q||P)=E_{P}[f(q(\mathbf{x})/p(\mathbf{x}))]$. 
\end{definition} 
Many divergences are specific cases obtained by selecting a suitable generator function $f$. 
For example, $f(u) = u \cdot \log u$ corresponds to KL-divergence. 

Then, we derive the variational representations of $f$-divergences 
using the Legendre transform of the convex conjugate of a twice differentiable convex function $f$, 
$f^*(\psi) = \sup_{u\in \mathbb{R}}\{\psi \cdot u - f(u)\}$ \citep{nguyen2007estimating}: 
\begin{equation} 
    D_{f} (Q||P) = \sup_{\phi \ge 0}\Big\{ E_Q\big[f'(\phi)\big] - E_P\big[f^*(f'(\phi)) \big] \Big\}, \label{Eq_Variational_representation_f_div_phi} 
\end{equation} 
where the supremum is taken over all measurable functions $\phi:\Omega \rightarrow \mathbb{R}$ 
with $ E_Q[\,|f'(\phi)|\,] < \infty$ and $E_P[\,|f^* (f'(\phi))|\,] < \infty$. 
The maximum value is achieved at $\phi(\mathbf{x})=q(\mathbf{x})/p(\mathbf{x})$. 

By replacing $\phi$ with a neural network model $\phi_{\theta}$, 
the optimal function for Equation  (\ref{Eq_Variational_representation_f_div_phi}) is trained 
through back-propagation using an $f$-divergence loss function, such that 
\begin{equation} 
    \mathcal{L}_f^{(R,S)}(\phi_{\theta}) =  - \left\{ \hat{E}_{Q[S]} \big[f'(\phi_{\theta})\big]- 
    \hat{E}_{P[R]}\big[f^*(f'(\phi_{\theta}))\big] \right\}, 
    \label{Eq_loss_func_f_div} 
\end{equation} 
where $\phi_{\theta}$ is a real-valued function, the superscript variable  ``$(R,S)$'' can be omitted when unnecessary and instead represented as $\mathcal{L}_f(\cdot)$. 
As shown in Table \ref{Table_for_loss_functions_for_DRE}, we list pairs of convex functions and the corresponding loss functions $\mathcal{L}_f(\phi_{\theta})$ in Equation (\ref{Eq_loss_func_f_div}) for several $f$-divergences.

\subsection{Train-loss hacking problem}\label{subsection_TrainlossHackingproblem} 
When $f$-divergence loss functions $\mathcal{L}_f(\phi_{\theta})$, as defined in Equation (\ref{Eq_loss_func_f_div}), are not lower-bounded, overfitting can occur during optimization. 
For example, we observe the case of the Pearson $\chi^2$ loss function, $\mathcal{L}_{\mathrm{chi}\text{-}\mathrm{sq}} = - 2 \cdot \hat{E}_Q\big[\phi_{\theta}\big] + \hat{E}_P\big[\phi_{\theta}^2\big]$, as follows. 
Since the term $- 2 \cdot \hat{E}_Q\big[\phi_{\theta}\big]$ is not lower-bounded, it can approach negative infinity, causing the entire loss function to diverge to negative infinity as $\phi_{\theta}(\mathbf{x}_i^q) \rightarrow \infty$ for $\mathbf{x}_i^q \in \hat{\mathbf{X}}_{Q[S]}$. 
Consequently, $\mathcal{L}_{\mathrm{chi}\text{-}\mathrm{sq}} \rightarrow - \infty$ when $\phi_{\theta}(\mathbf{x}_i^q) \rightarrow \infty$ for some $\mathbf{x}_i^q \in \hat{\mathbf{X}}_{Q[S]}$. 
As shown in Table \ref{Table_for_loss_functions_for_DRE}, both the KL-divergence and Pearson $\chi^2$ loss functions are not lower-bounded, and hence, are prone to overfitting during optimization. 
This phenomenon is referred to as \textit{train-loss hacking} by \citet{kato2021non}.


\subsection{Biased gradient problem}\label{subsection_Biasedgradientsproblem} 
Neural network parameters are updated using the accumulated gradients from each mini-batch.
It is desirable for these gradients to be unbiased, i.e., $E\big[\nabla_{\theta}\mathcal{L}_{f}(\theta)\big] = \nabla_{\theta} E\big[\mathcal{L}_f(\theta)\big]$ holds. 
However, the equality between $E\big[\nabla_{\theta}\mathcal{L}_{f}(\theta)\big]$ and $\nabla_{\theta} E\big[\mathcal{L}_f(\theta)\big]$ requires the uniform integrability of $\mathcal{L}_f(\theta)$, i.e.,  $\lim_{K\rightarrow\infty} \sup_{\theta} E\big[| \mathcal{L}_f(\theta)| \cdot I(\mathcal{L}_f(\theta) > K) \big] = 0$.
The uniform integrability condition is typically violated when the loss function exhibits heavy-tailed behavior, which often occurs for the standard $f$-divergence loss functions derived solely from Equation (\ref{Eq_loss_func_f_div}). 
Consequently, the standard loss functions frequently result in biased gradients. 

To illustrate this, consider employing the KL-divergence loss function for optimizing a shift parameter of $\phi_{\theta} =  |x - \theta| $, where $\theta \in (0,1)$ and $x \in [0,1]$.
Intuitively, in the above example, biased gradients occur because the KL-divergence loss gradients contain terms inversely proportional to the estimated density ratios, making their expectation diverge.
Specifically, the loss function is obtained as $\mathcal{L}_{KL}(\phi_{\theta}) = - \hat{E}_Q\big[\log \phi_{\theta}\big] + \hat{E}_P\big[\phi_{\theta}\big] - 1$, and the gradient is expressed as $\nabla_{\theta} \mathcal{L}_{KL}(\phi_{\theta}) = -\hat{E}_Q\big[ \nabla_{\theta}(\log \phi_{\theta})\big] + \hat{E}_P\big[\nabla_{\theta}(\phi_{\theta})\big]$. 
Then, we have $\frac{\partial}{\partial \theta } E\big[\log \phi_{\theta}(x)\big] = \frac{\partial}{\partial \theta } \int_{0}^{1} \log |x - \theta| \, dx = 
- \log(1 - \theta)$ and $E\big[\frac{\partial}{\partial \theta } \log  \phi_{\theta}(x)   \big] =  \int_{0}^{\theta} \frac{1}{\theta-x} dx   + \int_{\theta}^{1} \frac{1}{x-\theta} dx = \infty$. 
Consequently, we generally observe that $\nabla_{\theta}E\big[\mathcal{L}_{KL}(\phi_{\theta})\big] \neq E\big[\nabla_{\theta} \mathcal{L}_{KL}(\phi_{\theta})\big]$. 

To mitigate this issue, \citet{belghazi2018mutual} introduced a bias-reduction method for stochastic gradients in KL-divergence loss functions.

\subsection{Vanishing gradient problem}\label{subsection_Vanishinggradientsproblem} 
The vanishing gradient problem is a well-known issue in optimizing GANs \citep{arjovsky2017towards}. 
We suggest that this problem occurs when the following two conditions are met: (i) the loss function results in minimal updates of model parameters, and (ii) updating the model parameters leads to negligible changes in the model's outputs. Thus, the problem emerges when the following equation holds:

\begin{equation} 
    \underbrace{E\big[\nabla_{\theta} \mathcal{L}_f(\phi_{\theta})\big] = \mathbf{0}}_{\text{(i)}}   \  \  \text{\&} \  \ 
    \underbrace{E_Q\big[\nabla_{\theta} \phi_{\theta}\big] = \mathbf{0} \  \ \text{\&} \  \  E_P\big[\nabla_{\theta} \phi_{\theta}\big]  = \mathbf{0}}_{\text{(ii)}}, 
    \label{Eq_gradient_vanishing} 
\end{equation} 
where $\mathbf{0}$ denotes a zero vector of the same dimension as the model gradient.

In Equation (\ref{Eq_gradient_vanishing}), condition (i) describes the loss function's gradient vanishing, while condition (ii) ensures that the vanishing gradient condition persists. 
Specifically, the following three observations clarify their relationship:
First, condition (i) does not necessarily imply condition (ii);
Second, condition (i) alone does not guarantee its own persistence;
Third, condition (ii) ensures the continued validity of condition (i).

To illustrate the first observation, consider the KL-divergence loss. 
Because its gradient is obtained as $- \hat{E}_Q \big[ \nabla_{\theta} \phi_{\theta} / \phi_{\theta} \big] + \hat{E}_P\big[\nabla_{\theta}\phi_{\theta}\big]$ as shown in Table \ref{Table_about_gradient_vanishing}, (i) reduces to $E_Q\big[ \nabla_{\theta} \phi_{\theta} / \phi_{\theta} \big] = E_P\big[\nabla_{\theta}\phi_{\theta}\big]$. 
Then, (i) does not ensure (ii). 
To understand the second and third observations, note that (ii) is both necessary and sufficient for preventing updates to the model parameters. 
Thus, if condition (ii) does not hold, the model's predictions may change, potentially leading to the breakdown of condition (i). 
On the other hand, if condition (ii) holds, the model updates do not alter the outputs, leaving the gradient condition (i) unchanged, thereby causing the vanishing gradient problem to persist.

Consider the scenario where estimated density ratios become extremely small or large, fulfilling sufficient conditions for Equation (\ref{Eq_gradient_vanishing}) to hold. 
Table \ref{Table_about_gradient_vanishing} presents the gradient formulas for the divergence loss functions (as provided in Table \ref{Table_for_loss_functions_for_DRE}) along with their asymptotic behavior of the loss gradients as $\phi_{\theta} \rightarrow 0$ or $\phi_{\theta} \rightarrow \infty$. 
These results demonstrate that major $f$-divergence loss functions satisfy the conditions for Equation (\ref{Eq_gradient_vanishing}), showing that $E \big[\nabla_{\theta} \mathcal{L}_f(\phi_{\theta})\big] \rightarrow c_1 \cdot E_Q\big[\nabla_{\theta} \phi_{\theta}\big] + c_2 \cdot E_P\big[\nabla_{\theta} \phi_{\theta}\big]$, where $c_1$ and $c_2$ are constants, as $\phi_{\theta} \rightarrow 0$ or $\phi_{\theta} \rightarrow \infty$. 
In summary, all the divergence loss functions in Tables \ref{Table_for_loss_functions_for_DRE} and \ref{Table_about_gradient_vanishing} 
can experience vanishing gradients when the estimated density ratio approaches extremal estimates. 

\subsection{Sample size requirement problem for KL-divergence}\label{subsection_SamplesizerequirementproblemforKLdivergence} 
The sample complexity of the KL-divergence is $O(e^{KL(Q||P)})$, which implies that 
\begin{equation} \lim_{N \rightarrow \infty} N \cdot \mathrm{Var} \Big[ \widehat{KL^N}(Q||P) \Big] \ge e^{KL(Q||P)} - 1, \label{Eq_SamplesizerequirementproblemforKLdivergence} \end{equation} where $\widehat{KL^N}(Q||P)$ represents an arbitrary KL-divergence estimator for a sample size $N$ using a variational representation of the divergence, 
and $KL(Q||P)$ represents the true value of KL-divergence \citep{poole2019variational,song2019understanding,mcallester2020formal}. 
That is, when using KL-divergence loss functions, the sample size of the training data must increase exponentially as the true amount of KL-divergence increases in order to sufficiently train a neural network. 
To address this issue, existing methods divide the estimation of high divergence values into multiple smaller divergence estimations \citep{rhodes2020telescoping}.

\begin{table}[t] 
    \caption{
        List of $f$-divergence loss functions $\mathcal{L}_f(\phi_{\theta})$ in Equation (\ref{Eq_loss_func_f_div}), along with their associated convex functions and their lower-boundedness status.
        Part of the list of divergences and their convex functions is based on \citet{nowozin2016f}.} 
    \label{Table_for_loss_functions_for_DRE} 
    \centering 
     \vspace{3.0mm} 
    \begin{tabular}{lccc} 
        \toprule            
        Name 	     &convex function $f$  & $\mathcal{L}_f(\phi_{\theta})$ & \begin{tabular}{l} 
            Lower-\\ 
            bounded? 
        \end{tabular}\\ 
        \midrule             
        KL &   $u \cdot \log u$& 
        $- \hat{E}_Q\big[\log(\phi_{\theta})\big] +  \hat{E}_P\big[\phi_{\theta}\big] - 1$& No \\ 
        Pearson $\chi^2$     & $(u-1)^2$          & 
        $-2 \cdot \hat{E}_Q\big[\phi_{\theta}\big] +  \hat{E}_P\big[\phi_{\theta}^2\big] + 1$&  No \\ 
        Squared Hellinger   & $(\sqrt{u} - 1)^2$ &  $\hat{E}_Q\big[\phi_{\theta}^{-1/2}\big] 
         +  \hat{E}_P\big[\phi_{\theta}^{1/2}\big] - 2$ &    Yes                \\ 
        GAN      & \begin{tabular}{l}$ u\cdot\log u$ \\ 
                       $\ -(u+1)  \log (u+1)$ 
                    \end{tabular}  & 
        \begin{tabular}{l} 
            $\hat{E}_Q\big[ \log (1+\phi_{\theta}^{-1} )\big]$\\ 
            \ \   $+ \ \hat{E}_P\big[\log (1+\phi_{\theta})\big]$ \\ 
        \end{tabular} & Yes  \\ 
        \bottomrule          
    \end{tabular} 
     \vspace{3.0mm} 
    \caption{ 
        List of gradient formulas $\nabla_{\theta} \mathcal{L}_f(\phi_{\theta})$ of loss functions $\mathcal{L}_f(\phi_{\theta})$ in Table \ref{Table_for_loss_functions_for_DRE} and 
        the asymptotic behavior of $E\big[\nabla_{\theta} \mathcal{L}_f(\phi_{\theta})\big]$ as $\phi_{\theta} \rightarrow 0$ or $\phi_{\theta} \rightarrow \infty$ under regular conditions. 
    A symbol  ``*'' in the table 
    indicates that the asymptotic value cannot be expressed as a linear combination of $E_Q\big[\nabla_{\theta}\phi_{\theta}\big]$ and $E_P\big[ \nabla_{\theta}\phi_{\theta}\big]$, and 
        $\mathbf{0}$ denotes a vector of zeros with the same length as the model gradient. 
    } 
    \label{Table_about_gradient_vanishing} 
    \centering 
     \vspace{3.0mm} 
    \begin{tabular}{lccc} 
        \toprule            
        \multicolumn{2}{c}{} & 
        \multicolumn{2}{c}{$E\big[\nabla_{\theta} \mathcal{L}_f(\phi_{\theta})\big] \rightarrow  \text{?}$}\\ 
        \cmidrule(lr){3-4} 
        Name 	     & 
        $\nabla_{\theta} \mathcal{L}_f(\phi_{\theta})$ & 
        $\phi_{\theta} \rightarrow 0$& 
        $\phi_{\theta} \rightarrow \infty$\\ 
        \midrule             
        KL & 
        $- \hat{E}_Q\big[\nabla_{\theta}\phi_{\theta}/\phi_{\theta}\big] + \hat{E}_P\big[\nabla_{\theta}\phi_{\theta}\big]$& 
        * & 
        $ E_P\big[\nabla_{\theta} \phi_{\theta}\big] $\\ 
        Pearson $\chi^s$     & 
        $-2 \cdot \hat{E}_Q\big[\nabla_{\theta}\phi_{\theta}\big] 
         +  2 \cdot\hat{E}_P\big[ \nabla_{\theta}\phi_{\theta} \cdot \phi_{\theta}\big]$       & 
        $-2 \cdot E_Q\big[\nabla_{\theta}\phi_{\theta}\big]$ & *  \\ 
        Squared Hellinger   & 
        \begin{tabular}{l} 
            $-\frac{1}{2} \cdot \hat{E}_Q\big[\nabla_{\theta}\phi_{\theta}\cdot \phi_{\theta}^{-3/2}\big ]$\\ 
            $ \ \ \   + \  \frac{1}{2} \cdot \hat{E}_P\big[ \nabla_{\theta}\phi_{\theta} \cdot \phi_{\theta}^{-1/2}\big]$ 
        \end{tabular}  & 
        * &  $\mathbf{0}$\\ 
         GAN       & 
         \begin{tabular}{l} 
         $- \hat{E}_Q\big[ \nabla_{\theta}\phi_{\theta} / \big\{\phi_{\theta} \cdot (1+\phi_{\theta}) \big\}\big]$\\ 
         $\ \ \ \ \ \ \ + \  \hat{E}_P\big[\nabla_{\theta}\phi_{\theta}/(1+\phi_{\theta})\big]$ 
        \end{tabular} 
         & 
        \begin{tabular}{l} 
           * 
        \end{tabular} 
         & $\mathbf{0}$\\ 
        \bottomrule          
    \end{tabular} 
\end{table} 

\section{DRE using a neural network with an \texorpdfstring{$\alpha$}{α}-divergence loss}
\label{Section_DREusinganeuralnetworkwithanalphadivergenceloss} 
In this section, we derive our loss function from a variational representation of $\alpha$-divergence and present the training and prediction methods using this loss function. 
The exact claims and proofs for all theorems are deferred to Section \ref{Section_Appendix_proofs} in the Appendix. 

\subsection{Derivation of our loss function for DRE}\label{DerivationofourlossfunctionforDRE} 
Here, we define $\alpha$-divergence (Amari's $\alpha$-divergence), which is a subgroup of $f$-divergence, as \citep{amari2000methods}: 
\begin{equation} 
    D_{\alpha}(Q||P)=E_{P}\left[\frac{1}{\alpha\cdot(\alpha-1)} \cdot \left\{ \left( \frac{q(\mathbf{x})}{p(\mathbf{x})}\right)^{1 - \alpha} - 1 \right\}  \right], \label{Eq_alpha_div_def} 
\end{equation} 
where $\alpha \in \mathbb{R} \setminus \{0, 1\}$. 
From Equation (\ref{Eq_alpha_div_def}), Hellinger divergence is obtained when $\alpha=1/2$, and $\chi^2$ when $\alpha=-1$.

Then, we obtain the following variational representation of $\alpha$-divergence : 
\begin{theorem}\label{theorem_alpha_div_resp} 
    A variational representation of $\alpha$-divergence is given as 
    \begin{equation} 
        D_{\alpha} (Q||P) =  \sup_{\phi \ge 0} \left\{ 
        \frac{1}{\alpha \cdot (1-\alpha)} - \frac{1}{\alpha}\cdot  E_{Q} \Big[\phi^{\alpha} \Big] 
        - \frac{1}{1 - \alpha} \cdot E_{P} \Big[\phi^{\alpha - 1} \Big]  \right\}, \label{theorem_Eq_alpha_variation} 
    \end{equation} 
    where the supremum is taken over all measurable functions satisfying $ E_P[\phi^{1 - \alpha}] < \infty$ and $E_Q[\phi^{-\alpha}] < \infty$. 
    The maximum value is achieved at $\phi(\mathbf{x})=q(\mathbf{x})/p(\mathbf{x})$. 
\end{theorem} 

From the right-hand side of Equation (\ref{theorem_Eq_alpha_variation}), we obtain a standard $\alpha$-divergence loss function as 
\begin{equation} 
    \mathcal{L}_{\alpha\text{-standard}}^{(R,S)}(\phi_{\theta}\, ;\, \alpha) 
    =  \frac{1}{\alpha}\cdot  \hat{E}_{Q} \Big[\phi_{\theta}^{\alpha} \Big] 
       + \frac{1}{1 - \alpha} \cdot \hat{E}_{P} \Big[\phi_{\theta}^{\alpha - 1} \Big].  \label{Eq_loss_func_standard_alpha_div} 
\end{equation} 

Because $\nabla_{\theta} E_Q\big[ \phi_{\theta}^{\alpha}\big] \allowbreak \neq \allowbreak E_Q\big[ \nabla_{\theta}  (\phi_{\theta}^{\alpha})\big]$ 
and $\nabla_{\theta} E_P\big[ \phi_{\theta}^{\alpha - 1}\big] \allowbreak \neq \allowbreak E_P\big[\nabla_{\theta}  (\phi_{\theta}^{\alpha - 1})\big]$ are generally observed when $\alpha < 2$,  the standard $\alpha$-divergence loss function with $\alpha < 2$ has biased gradients. 

To obtain unbiased gradients for any $\alpha$, we rewrite the terms $ \phi_{\theta}^{\alpha}$ and $ \phi_{\theta}^{\alpha - 1}$ of the equation in Gibbs density form. 
Then, we have another variational representation of $\alpha$-divergence. 
\begin{theorem} \label{theorem_alpha_div_resp_in_gibbs_dinsity_form} 
    A variational representation of $\alpha$-divergence is given as 
    \begin{equation} 
        D_{\alpha} (Q||P)  =  \sup_{T:\Omega \rightarrow \mathbb{R}} \left\{ 
        \frac{1}{\alpha \cdot (1-\alpha)}  - \frac{1}{\alpha} \cdot E_{Q} \Big[e^{\alpha \cdot T} \Big] 
        - \frac{1}{1- \alpha} \cdot E_{P} \Big[ e^{(\alpha - 1) \cdot T} \Big] \right\}, 
        \label{theorem_alpha_div_loss_for_training} 
    \end{equation} 
    where the supremum is taken over all measurable function $T:\Omega \rightarrow \mathbb{R}$ satisfying 
    $E_P[e^{(\alpha - 1) \cdot T}] < \infty$ and $E_{Q} [e^{\alpha \cdot T}] < \infty$. 
    The equality holds for $T^*$ satisfying $e^{-T^*(\mathbf{x})} = q(\mathbf{x})/p(\mathbf{x})$. 
\end{theorem}

Subsequently, we obtain our loss function for DRE, called $\alpha$-Divergence loss function ($\alpha$-Div). 
\begin{definition}[$\alpha$-Div] 
    $\alpha$-Divergence loss is defined as: 
    \begin{equation} 
        \mathcal{L}_{\alpha\text{-Div}}^{(R,S)}(T_{\theta}\, ;\, \alpha) 
        = \frac{1}{\alpha}  \cdot  \hat{E}_{Q[S]} \Big[  e^{\alpha\cdot T_{\theta}} \Big] 
        + \frac{1}{1- \alpha} \cdot \hat{E}_{P[R]} \Big[e^{(\alpha -1 )\cdot T_{\theta}} \Big].  \label{Eq_loss_func_alpha_div} 
    \end{equation} 
\end{definition} 
The superscript  ``$(R,S)$'' is dropped when unnecessary and instead expressed as $\mathcal{L}_{\alpha\text{-Div}}(T_{\theta}\, ;\, \alpha)$.

\subsection{Training and predicting with \texorpdfstring{$\alpha$}{α}-Div} 
We train a neural network with $\alpha$-Div as described in Algorithm \ref{algo_train_DRE}. 
In practice, neural networks rarely achieve the global optimum in Equation  (\ref{theorem_alpha_div_loss_for_training}). 
The following theorem suggests that normalizing the estimated values, 
$q(\mathbf{x})/p(\mathbf{x}) = e^{-T_\theta(\mathbf{x})} / \hat{E}_P\left[e^{-T_\theta}\right]$, 
improves the optimization of the neural networks.

\begin{theorem} \label{theorem_alpha_div_loss_in_const_shift} 
    For a fixed function $T:\Omega \rightarrow \mathbb{R}$, 
    let $c^*$ be the optimal scalar value for the following infimum: 
    \begin{equation} 
        c^*  = \arg \inf_{c \in \mathbb{R}} E\Big[\mathcal{L}_{\alpha\text{-Div}}(T - c)\Big] 
        = \arg \inf_{c \in \mathbb{R}} \left\{ 
        \frac{1}{\alpha} \cdot  E_{Q} \Big[e^{\alpha \cdot(T - c)} \Big] 
        +  \frac{1}{1- \alpha} \cdot E_{P} \Big[ e^{(\alpha - 1) \cdot (T-c)} \Big] \right\}. 
        \label{Eq_theorem_alpha_div_loss_for_training_in_const_shift} 
    \end{equation} 
    Then, $c^*$ satisfies $E_{P} \left[ e^{- T - c^*} \right] = 1$. 
    That is, $e^{- T - c^*} = e^{- T} / E_{P} \left[ e^{- T} \right]$. 
\end{theorem}

\begin{algorithm}[t] 
    \caption{Training for DRE with $\alpha$-Div}\label{algo_train_DRE} 
    \begin{multicols}{2} 
        \begin{algorithmic} 
          \Require Data from denominator distribution $\{\mathbf{x}_i^p \}_{i=1}^R$, data from numerator distribution $\{\mathbf{x}_i^q \}_{i=1}^S$, learning rate $\eta$, and initial parameters $\theta_1$.
          \Ensure A neural network model $T_{\theta_N}$.
        \end{algorithmic} 
        \columnbreak 
        \begin{algorithmic}
          \For{$t=1$ to $N$} 
          \State $\hat{E}_{P} \leftarrow \frac{1}{R} \sum_{i=1}^R e^{(\alpha - 1) \cdot T_{\theta_t}(\mathbf{x}_i^p)}$ 
          \State $\hat{E}_{Q} \leftarrow \frac{1}{S} \sum_{i=1}^S e^{\alpha \cdot T_{\theta_t}(\mathbf{x}_i^q)}$ 
          \State $\mathcal{L}_{\alpha\text{-Div}}({\theta_t}) \leftarrow \hat{E}_{Q}/\alpha + \hat{E}_{P}/(1 - \alpha)$ 
          \State $\theta_{t+1} \leftarrow \theta_t  - \eta \cdot \nabla_{\theta_t} \mathcal{L}_{\alpha\text{-Div}}({\theta_t})$ 
          \EndFor
          \State \textbf{Return} trained parameters $\theta_{N+1}$.
      \end{algorithmic} 
    \end{multicols} 
    \vskip -0.1in 
\end{algorithm}

\section{Theoretical results for the proposed loss function}\label{Section_TheoreticaljustificationsofalphaDiv} 
In this section, we provide theoretical results that justify our approach with $\alpha$-Div. 
The exact claims and proofs for all the theorems are deferred to Section \ref{Appendix_Section_proof_TheoreticaljustificationsofalphaDiv} in the Appendix.

\subsection{Addressing the train-loss hacking problem}\label{Section_AddressingTrainLossHackingProblem} 

$\alpha$-Div avoids the train-loss hacking problem when $\alpha$ is within $(0, 1)$. 
Table \ref{Table_for_statusofthelowerboundedness_of_alphadiv_for_DRE} summarizes the lower-boundedness status of $\alpha$-Div for each case: $\alpha < 0$, $0 < \alpha < 1$, or $\alpha > 1$. 
$\alpha$-Div is lower-bounded when $0 < \alpha < 1$, whereas it is not lower-bounded when $\alpha > 1$ or $\alpha < 0$. 
Thus, selecting $\alpha$ from the interval (0, 1) effectively prevents the train-loss hacking problem.

\subsection{Unbiasedness of gradients}\label{Section_Unbiasednessofgradients}
By rewriting $\phi_{\theta}^{\alpha}$ in the Gibbs density form $e^{\alpha \cdot T_{\theta}}$, we mitigate the heavy-tailed behavior that often breaks uniform integrability in the standard $\alpha$-divergence loss functions. Hence, we present Theorem~\ref{theorem_gradient_unbiased}, which guarantees the unbiasedness of $\alpha$-Div's gradients. 

\begin{theorem}[Informal statement] \label{theorem_gradient_unbiased} 
    Let $T_{\theta}(\mathbf{x}):\Omega \rightarrow \mathbb{R}$ be a function such that 
    the map $\theta = (\theta_1,\theta_2, \ldots, \theta_p) \in \Theta \mapsto T_{\theta}(\mathbf{x})$ is differentiable for all $\theta$ and for $\mu$-almost every $\mathbf{x} \in \Omega$. 
    Under some regularity conditions, including the local Lipschitz continuity of $T_{\theta}$, we have
    \begin{equation} 
        E\Big[ \nabla_{\theta}\mathcal{L}_{\alpha\text{-Div}}(T_{\theta}; \alpha)\big|_{\theta = \bar{\theta}} \Big] 
        = \nabla_{\theta} E\Big[ \mathcal{L}_{\alpha\text{-Div}}(T_{\theta}; \alpha)\Big] \big|_{\theta = \bar{\theta}}. 
    \end{equation} 
\end{theorem}
In Section \ref{subsection_Experimentsonimprovementofoptimizationefficiencybyremovinggradientbias}, we empirically confirm that this unbiasedness is crucial for stable and effective optimization, as it prevents gradient estimates from drifting in the presence of heavy-tailed data.

\subsection{Addressing gradient vanishing problem}\label{subsection_Addressinggradientvanishingproblem} 

\begin{table*}[t] 
     \caption{Lower-boundedness status of $\alpha$-Div for each case of $\alpha < 0$, $\alpha > 1$, and $0 < \alpha < 1$.} 
     \label{Table_for_statusofthelowerboundedness_of_alphadiv_for_DRE} 
     \centering 
     \begin{tabular}{cccccc} 
         \toprule      
         Intervals of $\alpha$   & $\frac{1}{\alpha} \cdot \hat{E}_{Q[S]}$& $+$ & $\frac{1}{1- \alpha} \cdot \hat{E}_{P} \left[ e^{(\alpha - 1) \cdot T} \right]$ &$=$ &$\mathcal{L}_{\alpha\text{-Div}}(T; \alpha)$\\ 
         \midrule       
         $\alpha < 0$ & $\downarrow -\infty$ (as $e^T \uparrow \infty$) && $\ge 0$ & & lower-unbounded \\ 
         $\alpha > 1$ & $\ge 0$ && $\downarrow -\infty$ (as $e^T \uparrow \infty$) & & lower-unbounded \\ 
         $0 < \alpha < 1$ & $\ge 0$ & &$\ge 0$ & & \textbf{ lower-bounded } \\ 
         \bottomrule     
     \end{tabular} 
    \end{table*} 
  
    \begin{table*}[t] 
    \caption{Behavior of $E\big[\nabla_{\theta} \mathcal{L}_{\alpha\text{-standard}}(\phi_{\theta})\big]$ and $E\big[\nabla_{\theta} \mathcal{L}_{\alpha\text{-Div}}(T_{\theta})\big]$ as estimated probability ratios approach $0$ or $\infty$, for each case of $\alpha < 0$, $\alpha > 1$, and $0 < \alpha < 1$. The notations ``$\rightarrow \infty$'', ``$\rightarrow -\infty$'', and ``$\rightarrow \infty - \infty$'' indicate that at least one element of the gradient diverges positively, negatively, or becomes numerically unstable due to the subtraction of two diverging terms, respectively.}\label{Table_for_statusofgradientvanishing_alphadiv_for_DRE} 
    \centering 
    \begin{tabular}{ccccc} 
        \toprule            
        & 
        \multicolumn{2}{c}{$\big[\nabla_{\theta} \mathcal{L}_{\alpha\text{-standard}}(\phi_{\theta}) \big]\rightarrow  \text{?}$} & 
        \multicolumn{2}{c}{$E\big[\nabla_{\theta} \mathcal{L}_{\alpha\text{-Div}}(T_{\theta}) \big] 
            \rightarrow  \text{?} $} \\ 
         \cmidrule(lr){2-3} 
         \cmidrule(lr){4-5} 
        Intervals of $\alpha$  & 
        $E_P\big[\phi_{\theta}\big] \rightarrow 0$& 
        $E_P\big[\phi_{\theta} \big]\rightarrow \infty$& 
        $E_P\big[e^{T_{\theta}}\big] \rightarrow 0$& 
        $E_P\big[e^{T_{\theta}}\big] \rightarrow \infty$\\ 
        \midrule 
        $\alpha < 0$& 
        $\infty$& 
        $\mathbf{0}$& 
        $\infty - \infty$& 
        $\mathbf{0}$\\ 
        $\alpha > 1$ & 
        $\mathbf{0}$& 
        $-\infty$& 
        $\mathbf{0}$& 
        $\infty - \infty$\\ 
        $0 < \alpha < 1$& 
        $\infty$& 
        $\mathbf{0}$& 
        $- \infty$& 
        $\infty$\\ 
        \bottomrule          
    \end{tabular} 
\end{table*} 

When $\alpha$ is within $(0, 1)$, $\alpha$-Div avoids the gradient vanishing issue during training.  
Below, we describe why gradient vanishing does not occur in this case. 

First, we obtain the gradients of the standard $\alpha$-divergence loss in Equation (\ref{Eq_loss_func_standard_alpha_div}) and $\alpha$-Div: 
\begin{align} 
     \nabla_{\theta}	\mathcal{L}_{\alpha\text{-standard}} (\phi_{\theta}) &= 
     \hat{E}_{Q} \Big[ \nabla_{\theta} \phi_{\theta} \cdot {\phi_{\theta}}^{\alpha - 1} \Big] 
    - \hat{E}_{P} \Big[ \nabla_{\theta} \phi_\theta \cdot {\phi_{\theta}}^{\alpha - 2} \Big], \label{Eq_delta_standard_loss_alpha} \\ 
    \nabla_{\theta}	\mathcal{L}_{\alpha\text{-Div}} (T_{\theta}) &= 
     \hat{E}_{Q} \Big[ \nabla_{\theta} T_{\theta} \cdot e^{\alpha \cdot T_{\theta}} \Big] 
    - \hat{E}_{P} \Big[ \nabla_{\theta} T_{\theta} \cdot e^{(\alpha - 1) \cdot T_{\theta}} \Big]. \label{Eq_delta_loss_alpha} 
\end{align} 

Next, consider the case where the estimated probability ratios, $\phi_{\theta}$ and $e^{T_{\theta}}$, are either nearly zero or very large for some point $\mathbf{x}$. 
Because equations such that $E_{Q}[e^{T_{\theta}}] \rightarrow 0 \Leftrightarrow E_{P}[e^{T_{\theta}}] \rightarrow 0$ and $E_{Q}[e^{T_{\theta}}] \rightarrow \infty \Leftrightarrow E_{P}[e^{T_{\theta}}] \rightarrow \infty$ follow from  the assumption that $p(\mathbf{x}) > 0 \Leftrightarrow q(\mathbf{x}) > 0$ for all $\mathbf{x} \in \Omega$,
the behavior of $E\big[\nabla_{\theta}\mathcal{L}_{\alpha\text{-Div}}(T_{\theta}; \alpha)\big]$ under certain regularity conditions for $T_{\theta}$, 
as $E_{P}[e^{T_{\theta}}] \rightarrow 0$ or $E_{P}[e^{T_{\theta}}] \rightarrow \infty$, 
is summarized in Table \ref{Table_for_statusofgradientvanishing_alphadiv_for_DRE}.

In all cases except for $\alpha$-Div with $0 < \alpha < 1$, vanishing of the loss gradients 
is observed, such that $E\big[\nabla_{\theta} \mathcal{L}_{\alpha\text{-standard}}(\phi_{\theta}) \big] \rightarrow \mathbf{0}$ 
or $E\big[\nabla_{\theta} \mathcal{L}_{\alpha\text{-Div}}(T_{\theta}) \big] \rightarrow \mathbf{0}$. 
This implies that, during optimization, neural networks may remain stuck at extreme local minima 
when their density ratio estimations are either $0$ or $\infty$. 
However, this issue is avoided when $\alpha$ is within the interval $(0, 1)$. 
Additionally, choosing $\alpha$ within the interval $(0, 1)$ mitigates numerical instability arising from large differences in gradient values of the loss function for $\alpha > 1$ and $\alpha < 0$, which is represented as $E\big[\nabla_{\theta} \mathcal{L}_{\alpha\text{-Div}}(T{\theta}; \alpha)\big]\rightarrow \infty - \infty$ in Table \ref{Table_for_statusofgradientvanishing_alphadiv_for_DRE}.

\subsection{Sample size requirements for optimizing \texorpdfstring{$\alpha$}{α}-divergence}\label{subsection_Samplesizerequirementproblemforalphadivergence} 
We present the exact upper bound on the sample size required for minimizing $\alpha$-Div in Theorem \ref{theorem_consistency_alpha_div_est}, 
which corresponds to Equation (\ref{Eq_SamplesizerequirementproblemforKLdivergence}) for KL-divergence loss functions. 
The sample size requirement for minimizing $\alpha$-Div is upper-bounded depending on the value of $\alpha$. 
Intuitively, this property arises from the boundedness of Amari's $\alpha$-divergence: 
$0 \leq D_{\alpha} \leq 1/(\alpha \cdot(1-\alpha))$. 

\begin{theorem}\label{theorem_consistency_alpha_div_est} 
    Let $T^*= - \log (q(\mathbf{x})/p(\mathbf{x})) $ and $N=\min\{R, S\}$. Subsequently, let 
    \begin{equation} 
        \hat{D}^{(N)} (Q||P\, ;\, \alpha) = \frac{1}{\alpha\cdot(1-\alpha)} -  \mathcal{L}^{(N,N)}_{\alpha\text{-Div}}(T^*\, ;\, \alpha). 
    \end{equation} 
    Then, 
    \begin{equation} 
        \sqrt{N} \cdot \left\{ \hat{D}^{(N)} (Q||P\, ;\, \alpha) - D(Q||P\, ;\, \alpha) \right\} \xrightarrow{\ \  d \ \ } 
        \mathcal{N}\big(0, \sigma_{\alpha}\big) \label{Eq_theorem_consistency_alpha_div_est} 
    \end{equation} 
    holds, where 
    \begin{align} 
        \sigma_{\alpha}^2  &=  C^1_{\alpha} \cdot D(Q||P\,;\, 2\alpha) + C^2_{\alpha} \cdot  D(Q||P\,;\, 2 \alpha - 1) \nonumber\\ 
        & \ \, \quad  + C^3_{\alpha} \cdot  D(Q||P\, ;\, \alpha)^2 +  C^4_{\alpha} \cdot  D(Q||P\, ;\, \alpha) + C^5_{\alpha}, \nonumber\\ 
    \end{align} 
    and $C^1_{\alpha}=2\alpha \cdot (1 - 2\alpha)/\alpha^2$, $C^2_{\alpha} = 2\alpha \cdot (1 - 2\alpha)/(1-\alpha)^2$, 
    $C^3_{\alpha}= - 1/\alpha^2 - 1/(1 - \alpha)^2$, $C^4_{\alpha} = 2/\alpha^2 + 2/(1 - \alpha)^2$, 
    and  $C^5_{\alpha}= (1/\alpha^2+1/(1-\alpha)^2) \cdot (2 - 2 \alpha \cdot (1-\alpha))$. 
\end{theorem} 

Unfortunately, despite the sample requirement stated in Equation (\ref{Eq_theorem_consistency_alpha_div_est}), 
we empirically find that the estimation accuracy for $\alpha$-Div and KL-divergence loss functions is roughly the same as that of KL-divergence loss functions 
in downstream tasks of DRE, including KL-divergence estimation, as discussed in Section \ref{subsection_ExperimentsontheestimationaccuracyusinghighKLdivergencedata}.


\section{Experiments}\label{Section_NumericalExperiment} 

We evaluated the performance of our approach using synthetic datasets. 
First, we assessed the stability of the proposed loss function due to its lower-boundedness for $\alpha$ within $(0,1)$. 
Second, we validated the effectiveness of our approach in addressing the biased gradient issue in the training losses. 
Finally, we examined the $\alpha$-divergence loss function for DRE using high KL-divergence data. 
Details on the experimental settings and neural network training are provided in Section \ref{Section_Appendix_TheDetailsOfNumericalExperiments} in the Appendix. 

In addition to the results presented in this section, we conducted two additional experiments: 
a comparison of $\alpha$-Div with existing DRE methods, and experiments using real-world data. 
These additional experiments are reported in Section \ref{Section_Appendix_Additionalexperiments} in the Appendix. 

\subsection{Experiments on the Stability of Optimization for Different Values of \texorpdfstring{$\alpha$}{α}}\label{subsection_Experimentsonthestabilityinoptimizationfordifferentvaluesofalpha} 

We empirically confirmed the stability of optimization using $\alpha$-Div, as discussed in Section \ref{Section_AddressingTrainLossHackingProblem}. 
This includes addressing the potential divergence of training losses for $\alpha > 1$ and $\alpha < 0$, 
and observing the stability of optimization when $\alpha$ is within $(0, 1)$. 
Subsequently, we conducted experiments using synthetic datasets to examine the behavior of training losses 
during optimization across different values of $\alpha$ at each learning step. 

\textbf{Experimental Setup.} 
First, we generated 100 training datasets from two 5-dimensional normal distributions, 
$P=\mathcal{N}(\mu_p, I_5)$ and $Q=\mathcal{N}(\mu_q, \Sigma_q)$, 
where $\mu_p = \mu_q = (0, 0, \ldots, 0)$, and $I_5$ denotes the $5$-dimensional identity matrix. 
The covariance matrix $\Sigma_q = (\sigma_{ij})_{i=1}^5$ is defined as 
 $\sigma_{ii}=1$, and $\sigma_{ij}=0.8$ for $i\neq j$. 
Subsequently, we trained neural networks using the synthetic datasets by optimizing $\alpha$-Div for 
$\alpha = -3.0, \allowbreak -2.0, \allowbreak -1.0, \allowbreak 0.2, \allowbreak 0.5, \allowbreak 0.8, \allowbreak 2.0, \allowbreak 3.0$, and $4.0$, 
while measuring training losses at each learning step. 
For each value of $\alpha$, 100 trials were performed. 
Finally, we reported the median of the training losses at each learning step, along with the ranges between 
the 45th and 55th percentiles and between the 2.5th and 97.5th percentiles.

\textbf{Results.} 
Figure \ref{Figure_subsection_Experimentsonthestabilityinoptimizationfordifferentvaluesofalpha} presents the training losses of $\alpha$-Div across the learning steps for $\alpha = -2.0, \allowbreak 3.0$, and $0.5$. 
Results for other values of $\alpha$ are provided in Section \ref{Apdx_Section_thedetailsExperimentsonthestabilityinoptimizationfordifferentvaluesofalpha} in the Appendix. 
The figures on the left ($\alpha = -2.0$) and in the center ($\alpha = 3.0$) show that the training losses diverged to negative infinity 
when $\alpha < 0$ or $\alpha > 1$. 
In contrast, the figure on the right ($\alpha = 0.5$) demonstrates that the training losses successfully converged. 
These results highlight the stability of $\alpha$-Div's optimization when $\alpha$ is within the interval $(0, 1)$, 
as discussed in Section \ref{Section_AddressingTrainLossHackingProblem}. 

\subsection{Experiments on the Improvement of Optimization Efficiency by Removing Gradient Bias}
\label{subsection_Experimentsonimprovementofoptimizationefficiencybyremovinggradientbias} 

\begin{figure*}[t] 
  \begin{center} \centerline{\includegraphics[width=1.00\columnwidth]{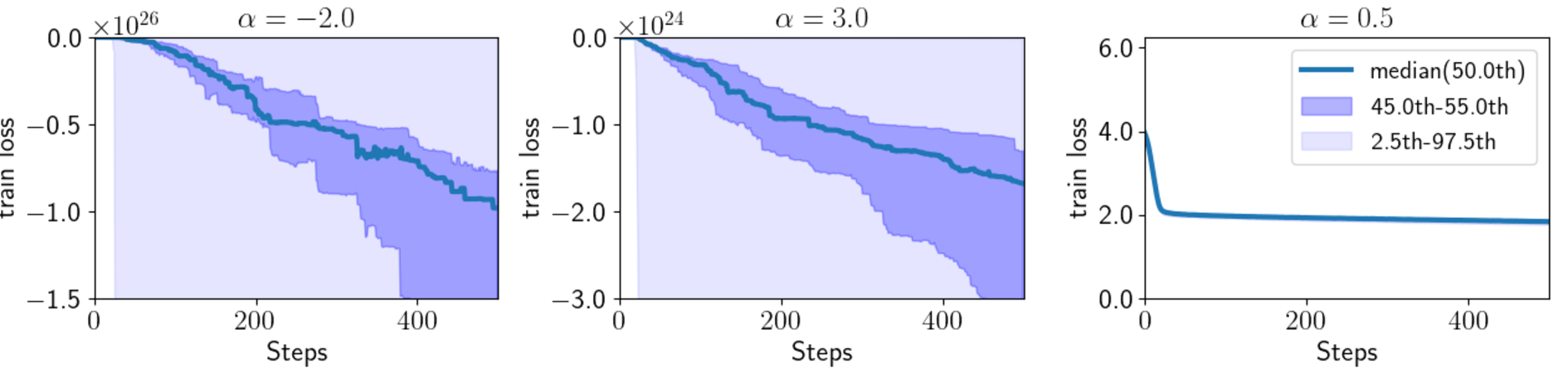}} 
  \caption{Results from Section \ref{subsection_Experimentsonthestabilityinoptimizationfordifferentvaluesofalpha}. The left ($\alpha = -2.0$), center ($\alpha = 3.0$), and right ($\alpha = 0.5$) graphs show training losses ($y$-axis) over learning steps ($x$-axis) during optimization using $\alpha$-Div with different $\alpha$ values. Solid blue lines represent median training losses, dark blue shaded areas show the 45th to 55th percentiles, and light blue shaded areas represent the 2.5th to 97.5th percentiles.} \label{Figure_subsection_Experimentsonthestabilityinoptimizationfordifferentvaluesofalpha} \end{center} 
   \begin{center}
     \centerline{\includegraphics[width=1.00\columnwidth]{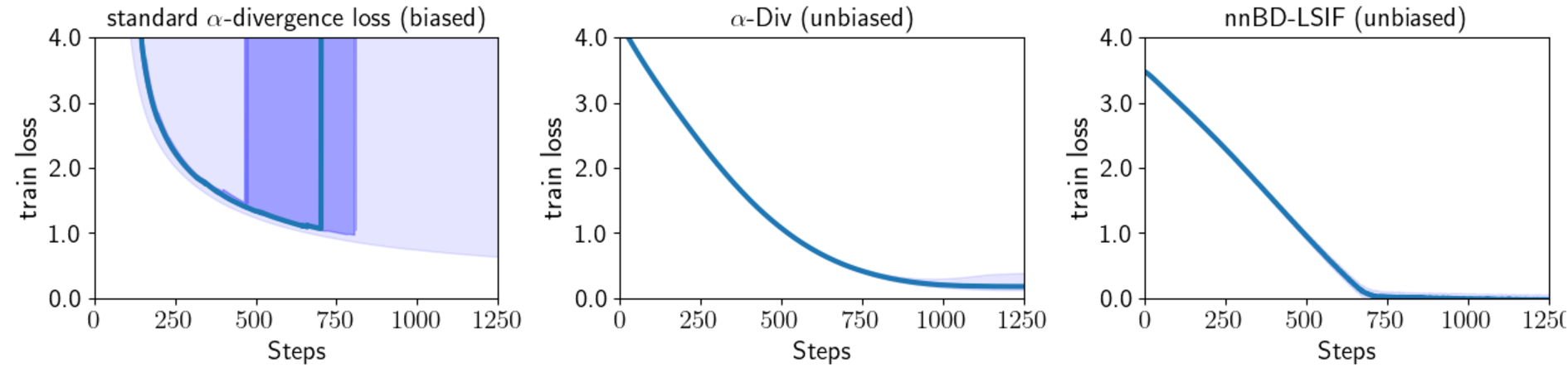}}
     \centerline{\includegraphics[width=1.00\columnwidth]{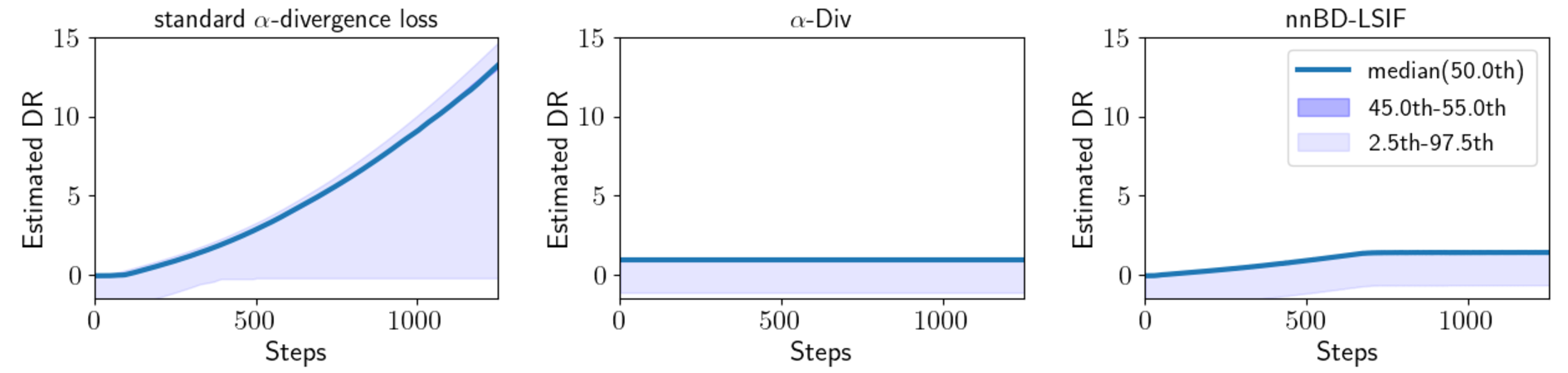}}   
     \caption{Results from Section \ref{subsection_Experimentsonimprovementofoptimizationefficiencybyremovinggradientbias}. The top row shows training losses, and the bottom row shows estimated density ratios (DR) during optimization.  The left column uses the standard $\alpha$-divergence loss function (biased gradients), the center column uses $\alpha$-Div (unbiased gradients), and the right column uses nnBD-LSIF (unbiased gradients). The $x$-axis represents learning steps. Solid blue lines indicate median values, dark blue shaded areas show the 45th to 55th percentiles, and light blue shaded areas represent the 2.5th to 97.5th percentiles.} \label{Figure_xperimentsonimprovementofoptimizationefficiencybyremovinggradientbias} 
   \end{center} 
    \vskip -0.3in 
\end{figure*} 

Unbiased gradients of loss functions are expected to optimize neural network parameters more effectively than biased gradients, 
since they update the parameters in ideal directions at each iteration.
We empirically compared the efficiency of minimizing training losses between the proposed loss function and the standard $\alpha$-divergence loss function derived from Equation (\ref{Eq_delta_standard_loss_alpha}), which highlighted the effectiveness of the unbiased gradients of the proposed loss function. 
Additionally, we observed that the estimated density ratios using the standard $\alpha$-divergence loss function diverged to large positive values, suggesting that the gradients of the standard $\alpha$-divergence loss function vanished. In contrast, $\alpha$-Div exhibited stable estimation.
This finding aligns with the discussion in Section \ref{subsection_Addressinggradientvanishingproblem}. 

\textbf{Experimental Setup.} We first generated 100 training datasets from two normal distributions, $P=\mathcal{N}(\mu_p, I_5)$ and $Q=\mathcal{N}(\mu_q, I_5)$, where $I_5$ denotes the 5-dimensional identity matrix. The means were set as $\mu_p = (-5/2, 0, 0, 0, 0)$ and $\mu_q = (5/2, 0, 0, 0, 0)$. We then trained neural networks using three different loss functions: the standard $\alpha$-divergence loss function defined in Equation (\ref{Eq_delta_standard_loss_alpha}), $\alpha$-Div, and deep direct DRE (D3RE) \citep{kato2021non}. Training losses were measured at each learning step. 
D3RE addresses train-loss hacking issues associated with Bregman divergence loss functions, as described in Section \ref{subsection_Vanishinggradientsproblem}, by mitigating the lower-unboundedness of loss functions. 
Specifically, for D3RE, we employed the neural network-based Bregman divergence Least Squares Importance Fitting (nnBD-LSIF) loss function, which ensures unbiased gradients and stable optimization. The hyperparameter for nnBD-LSIF was set to $C=2$. For both the standard $\alpha$-divergence loss and $\alpha$-Div, we used $\alpha = 0.5$. Finally, we reported the median training losses at each learning step, along with ranges between the 45th and 55th percentiles and between the 2.5th and 97.5th percentiles. 

\paragraph{Results.} 
The top row in Figure \ref{Figure_xperimentsonimprovementofoptimizationefficiencybyremovinggradientbias} illustrates the training losses at each learning step for each loss function. The center and right panels show that $\alpha$-Div and nnBD-LSIF are more effective at minimizing training losses compared to the standard $\alpha$-divergence loss function. These findings indicate that the unbiased gradient of $\alpha$-Div, like nnBD-LSIF, leads to more efficient neural network optimization than the biased gradient of the standard $\alpha$-divergence loss function. These results highlight the ineffectiveness in optimization of the standard $\alpha$-divergence loss function   inherent in its biased gradients and  $\alpha$-Div successfully mitigates this issue. 

Additionally, the training losses for the standard $\alpha$-divergence loss function diverged to positive infinity after 400 steps (the top panel in the left column), and the estimated density ratio diverged during optimization (the bottom panel in the left column). 
As shown in Table \ref{Table_for_statusofgradientvanishing_alphadiv_for_DRE}, 
the divergence of both the standard $\alpha$-divergence loss function and the estimated density ratio when $0 < \alpha < 1$, that is $\mathcal{L}_{\alpha\text{-standard}}(\phi_{\theta}) \rightarrow \infty$ and $\phi_{\theta} \rightarrow \infty$ with $0 < \alpha < 1$, imply that $E\big[\nabla_{\theta} \mathcal{L}_{\alpha\text{-standard}}(\phi_{\theta})\big] \rightarrow \mathbf{0}$. 
Consequently, these results demonstrate that the vanishing gradient issue in the standard $\alpha$-divergence loss function occurs when the estimated density ratio $\phi_{\theta}$ becomes  very large when $0 < \alpha < 1$. In contrast, $\alpha$-Div avoids this instability by maintaining stable gradients, which aligns with the discussion in Section \ref{subsection_Addressinggradientvanishingproblem}.

\subsection{Experiments on the Estimation Accuracy Using High KL-Divergence Data} \label{subsection_ExperimentsontheestimationaccuracyusinghighKLdivergencedata} 
\begin{figure*}[t] 
\begin{center} \centerline{\includegraphics[width=.70\columnwidth]{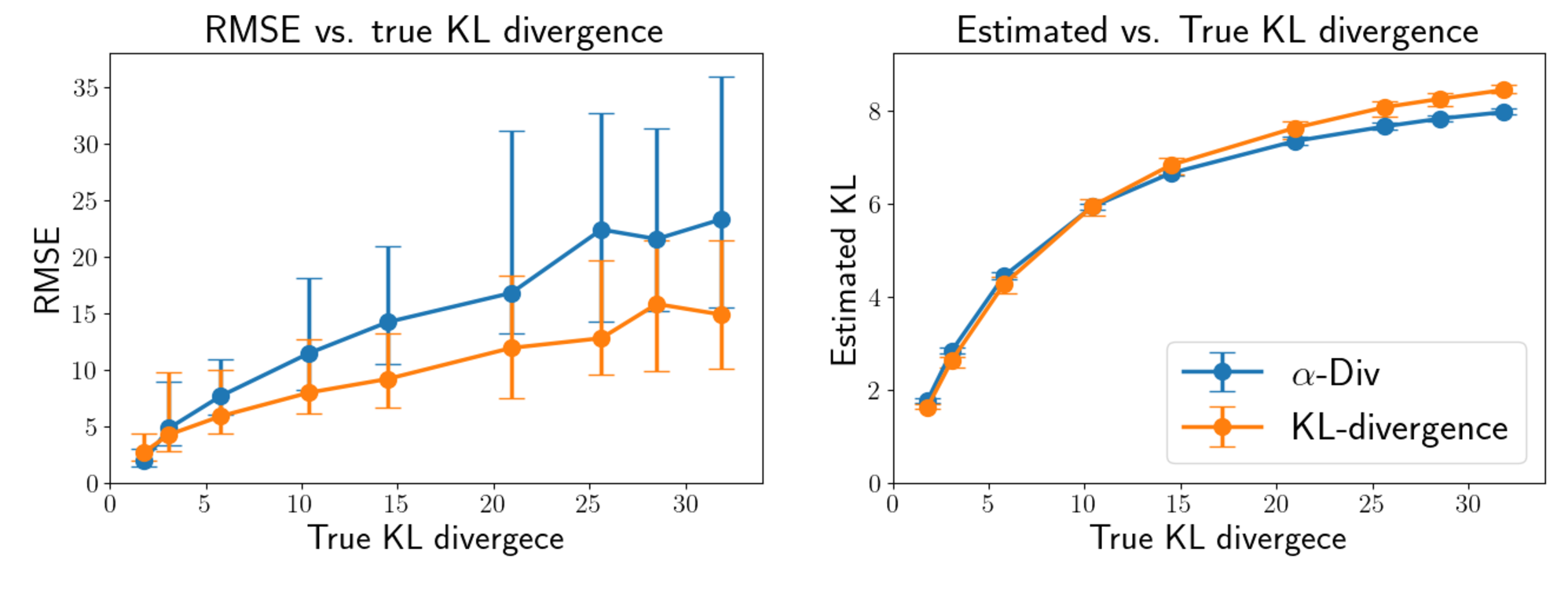}} \caption{Results of Section \ref{subsection_ExperimentsontheestimationaccuracyusinghighKLdivergencedata}. The $x$-axis represents the ground truth KL-divergence of the data. The $y$-axes of the left and right graphs represent the RMSE and estimated KL-divergence, respectively. The plot shows the median $y$-axis values for the ground truth KL-divergence. Vertical lines indicate the interquartile range (25th to 75th percentiles) of the $y$-axis values.} \label{Figure_ExperimentsontheestimationaccuracyusinghighKLdivergencedata} 
 \end{center} 
 \vskip -0.3in 
\end{figure*}

In Section \ref{subsection_SamplesizerequirementproblemforKLdivergence}, we examined the $\alpha$-divergence loss function, hypothesizing that its boundedness could address sample size issues in high KL-divergence data. Theorem \ref{theorem_consistency_alpha_div_est}, based on this boundedness, suggests that $\alpha$-Div can be minimized regardless of the true KL-divergence, indicating its potential for effective DRE with high KL-divergence data. To validate this hypothesis, we assessed DRE and KL-divergence estimation accuracy using both $\alpha$-Div and a KL-divergence loss function. 

However, we observed that the RMSE of DRE using $\alpha$-Div increased significantly with higher KL-divergence, similar to the KL-divergence loss function.

Additionally, both methods yielded nearly identical KL-divergence estimations. These findings suggest that the accuracy of DRE and KL-divergence estimation is primarily influenced by the true amount of KL-divergence in the data rather than the $\alpha$-divergence. 

\textbf{Experimental Setup.} We generated 100 training and 100 test datasets, each containing 10,000 samples. The datasets were drawn from two normal distributions, $P=\mathcal{N}(\mu_p, \sigma^2 \cdot I_3)$ and $Q=\mathcal{N}(\mu_q, 4^2 \cdot I_3)$, where $\mu_p=(-3/2, -3/2, -3/2)$ and $\mu_q=(3/2, 3/2, 3/2)$, with $I_3$ denoting the 3-dimensional identity matrix. The values of $\sigma$ were set to 1.0, 1.1, 1.2, 1.4, 1.6, 2.0, 2.5, and 3.0. Correspondingly, the ground truth KL-divergence values of the datasets were 31.8, 25.6, 21.0, 14.5, 10.4, 5.8, 3.1, and 1.8 nats\footnote{A 'nat' is a unit of information measured using the natural logarithm (base $e$)}, reflecting the increasing $\sigma^2$ values. The true density ratios of the test datasets are known for this experimental setup. We trained neural networks on the training datasets by optimizing both $\alpha$-Div with $\alpha=0.5$ and the KL-divergence loss function. After training, we measured the root mean squared error (RMSE) of the estimated density ratios using the test datasets. Additionally, we estimated the KL-divergence of the test datasets based on the estimated density ratios using a plug-in estimator. Finally, we reported the median RMSE of the DRE and the estimated KL-divergence, along with the interquartile range (25th to 75th percentiles), for both the KL-divergence loss function and $\alpha$-Div.

\textbf{Results.} Figure \ref{Figure_ExperimentsontheestimationaccuracyusinghighKLdivergencedata} shows the experimental results. 
The $x$-axis represents the true KL-divergence values of the test datasets, while the y-axes of the graphs display the RMSE (left) and estimated KL-divergence (right) for the test datasets. 
We empirically observed that the RMSE for DRE using $\alpha$-Div increased significantly as the KL-divergence of the datasets increased. A similar trend was observed for the KL-divergence loss function. Additionally, the KL-divergence estimation results were nearly identical between both methods. These findings indicate that the accuracy of DRE and KL-divergence estimation is primarily determined by the magnitude of KL-divergence in the data and is less influenced by $\alpha$-divergence. 
Therefore, we conclude that the approach discussed in Section \ref{subsection_Samplesizerequirementproblemforalphadivergence} offers no advantage over the KL-divergence loss function in terms of the RMSE for DRE with high KL-divergence data. 
However, we believe that these empirical findings contribute to a deeper understanding of the accuracy of downstream tasks in DRE using $f$-divergence loss functions.

\section{Conclusion}\label{conclusion} 
This study introduced a novel loss function for DRE, $\alpha$-Div, which is both concise and provides stable, efficient optimization. We offered technical justifications and demonstrated its effectiveness through numerical experiments. The empirical results confirmed the efficiency of the proposed loss function. However, experiments with high KL-divergence data revealed that the $\alpha$-divergence loss function did not offer a significant advantage over the KL-divergence loss function in terms of RMSE for DRE. These findings contribute to a deeper understanding of the accuracy of downstream tasks in DRE when using $f$-divergence loss functions.

\bibliography{references_paper_alphaDiv} 
\bibliographystyle{plainnat}

\newpage 

\appendix 

\section{Organization of the Supplementary Document} 
The organization of this supplementary document is as follows: 
Section \ref{SectionRelatedWork} reviews prior work in DRE using $f$-divergence optimization. 
Section \ref{Section_Appendix_proofs} presents the theorems and proofs cited in this study. 
Section \ref{Section_Appendix_TheDetailsOfNumericalExperiments} provides details of the numerical experiments conducted. 
Finally, Section \ref{Section_Appendix_Additionalexperiments} presents additional experimental results. 

\section{Related Work}\label{SectionRelatedWork} 
\citet{nguyen2010estimating} proposed DRE using variational representations of $f$-divergences. \citet{sugiyama2012density} introduced density-ratio matching under the Bregman divergence, a general framework that unifies various methods for DRE. 
As noted by \citet{sugiyama2012density}, density-ratio matching under the Bregman divergence is equivalent to DRE using variational representations of $f$-divergences. 
\citet{kato2021non} proposed a correction method for Bregman divergence loss functions s, in which the loss functions diverge to negative infinity, as discussed in Section \ref{subsection_TrainlossHackingproblem}.  
For estimation in scenarios with high KL-divergence data, \citet{rhodes2020telescoping} proposed a method that divides the high KL-divergence estimation into multiple smaller divergence estimations. \citet{choi2022density} further developed a continuous decomposition approach by introducing an auxiliary variable for transforming the data distribution. DRE using variational representations of $f$-divergences has also been studied from the perspective of classification-based modeling. \citet{menon2016linking} demonstrated that DRE via $f$-divergence optimization can be represented as a binary classification problem. \citet{kato2019learning} proposed using the risk functions in PU learning for DRE. 

Lastly, we review prior studies on DRE focusing on $\alpha$-divergence loss functions. 
\citet{birrell2021variational} derived an $\alpha$-divergence loss function from R\'enyi's $\alpha$-divergence, while \citet{cai2020utilizing} employed a standard variational representation of Amari's $\alpha$-divergence with $\alpha < 0$ or $\alpha > 1$. 
\citet{kwon2024alpha} presented essentially the same $\alpha$-divergence loss function proposed in this study, utilizing the Gibbs density expression to measure entropy in thermodynamics. In contrast, we propose the loss function to address the biased gradient problem.

\section{Proofs}\label{Section_Appendix_proofs} 
In this section, we present the theorems and the proofs referenced in this study. First, we define $\alpha$-Div within a probabilistic theoretical framework. Following that, we provide the theorems and the proofs cited throughout the study. 

\paragraph{Capital, small and bold letters.} 
Random variables are denoted by capital letters. For example, $X$. 
Small letters are used for values of the random variables corresponding to the capital letters; 
$a$ denotes a value of the random variable $X$. 
Bold letters $\mathbf{X}$ and $\mathbf{x}$ represent sets of random variables and their values. 

\subsection{Definition of \texorpdfstring{$\alpha$}{α}-Div}\label{Subsection_RigorousdefinitionofalphaDiv} 

\begin{definition}[$\alpha$-Divergence loss]\label{Definition_alphaDivergenceloss} 

    Let $\mathbf{X}^{1}_{P}, \mathbf{X}^{2}_{P}, \ldots, \mathbf{X}^{R}_{P}$ denote $R$ i.i.d. random variables drawn from $P$, and let 
    $\mathbf{X}^{1}_{Q}, \mathbf{X}^{2}_{Q}, \ldots, \mathbf{X}^{S}_{Q}$ denote $S$ i.i.d. random variables drawn from $Q$. 
    Then, the $\alpha$-Divergence loss $\mathcal{L}_{\alpha\text{-Div}}^{(R,S)}(\cdot\, ;\, \alpha)$ is defined as follows: 
    \begin{equation} 
        \mathcal{L}_{\alpha\text{-Div}}^{(R,S)}(T\, ;\, \alpha) 
        = \frac{1}{\alpha} \cdot  \frac{1}{S}  \cdot \sum_{i=1}^{S} e^{\alpha\cdot T(\mathbf{X}^{i}_{Q})}   + \frac{1}{1- \alpha} \cdot  \frac{1}{R} \cdot \sum_{i=1}^{R} e^{(\alpha -1 )\cdot T(\mathbf{X}^{i}_{P})},  \label{Appensix_Eq_loss_func_alpha_div} 
    \end{equation} 
    where $T$ is a measurable function over $\Omega$ such that $T:\Omega \rightarrow \mathbb{R}$. 
\end{definition} 

\subsection{Proofs for Section \ref{Section_DREusinganeuralnetworkwithanalphadivergenceloss}}\label{Section_Appendix_ProofsForSection_DREusinganeuralnetworkwithanalphadivergenceloss} 
In this section, we provide the theorems and the proofs referenced in Section \ref{Section_DREusinganeuralnetworkwithanalphadivergenceloss}. 

\begin{theorem}\label{Apdx_theorem_alpha_div_resp} 
    A variational representation of $\alpha$-divergence is given as 
    \begin{align} 
        D(Q||P\, ;\, \alpha) &= \sup_{\phi \ge 0} \left\{ 
        \frac{1}{\alpha\cdot(1-\alpha)} - \frac{1}{\alpha} \cdot E_{Q} \left[\phi^{-\alpha} \right]  - \frac{1}{1 - \alpha} \cdot E_{P} \left[\phi^{1 - \alpha} \right]  \right\}, \label{lemma_Eq_alpha_variation} 
    \end{align} 
    where the supremum is taken over all measurable functions with $ E_P[\phi^{1 - \alpha}] < \infty$ and $E_Q[\phi^{-\alpha}] < \infty$. 
    The maximum value is achieved at $\phi = dQ/dP$. 
\end{theorem} 
\begin{proof}[Proof of Theorem \ref{Apdx_theorem_alpha_div_resp}] 
    Let $f_{\alpha}(t) = \{t^{1-\alpha} - (1-\alpha)\cdot t - \alpha\}/\{\alpha \cdot (\alpha-1)\}$ for $\alpha \neq 0, 1$, then 
    \begin{align} 
        E_{P} \left[f_{\alpha} \left( \frac{dQ}{dP} \right) \right] 
        &= 	E_{P} \left[ \frac{1}{\alpha\cdot(1-\alpha)} \cdot \left( \frac{dQ}{dP} \right)^{1-\alpha} 
        + \frac{1}{\alpha} \cdot \left( \frac{dQ}{dP} \right) 
        + \frac{1}{1 - \alpha} \right] \nonumber\\ 
        &= \frac{1}{\alpha\cdot(1-\alpha)}\cdot  E_{P} \left[  \left( \frac{dQ}{dP} \right)^{1-\alpha} \right] 
        + \frac{1}{\alpha} +  \frac{1}{1 - \alpha}  \nonumber\\ 
        &= 	D(Q||P\, ;\, \alpha). 
    \end{align} 
    Note that, the Legendre transform of $g_{\alpha}(x) = x^{1-\alpha}/(1 - \alpha)$ is obtained as 
    \begin{equation} 
        g_{\alpha}^* (x)=  \frac{\alpha}{\alpha - 1} \cdot x^{1- \frac{1}{\alpha}}, \label{Eq_conjugate_alpha} 
    \end{equation} 
    and for the Legendre transforms of functions, it holds that 
    \begin{equation} 
        \{C \cdot h(x)\}^*= C \cdot h^*\left(\frac{x}{C}\right) \quad \text{and} \quad \{h(x) + C\cdot x + D\}^*= h^*(x - C) - D. \label{Eq_formula_conjugate} 
    \end{equation} 
    Here, $A^*$ denotes the Legendre transform of $A$. 

    From Equations (\ref{Eq_conjugate_alpha}) and (\ref{Eq_formula_conjugate}), we have 
    \begin{align} 
        f_{\alpha}^*(t) 
        &= \left\{ \frac{1}{(-\alpha)} \cdot g_{\alpha}(t) +  \frac{1}{\alpha} \cdot t + \frac{1}{1 - \alpha}  \right\}^* \nonumber \\ 
        &= \frac{1}{(-\alpha)}\cdot g_{\alpha}^*  \left( -\alpha \cdot \left\{ t- \frac{1}{\alpha}\right\} \right)		- \frac{1}{1 - \alpha} \nonumber \\ 
        &= - \frac{1}{\alpha} \cdot g_{\alpha}^*  \left(1  - \alpha t\right) + \frac{1}{\alpha - 1} \nonumber \\ 
        &= - \frac{1}{\alpha} \cdot \left\{ \frac{\alpha}{\alpha - 1} \cdot \left(1  - \alpha t\right)^{1- \frac{1}{\alpha}} \right\}   + \frac{1}{\alpha - 1} \nonumber \\ 
        &= \frac{1}{1 - \alpha} \cdot \left(1  - \alpha t\right)^{1- \frac{1}{\alpha}}  + \frac{1}{\alpha - 1}. \label{Eq_conjugate_f_alpha} 
    \end{align} 

    By differentiating $f_{\alpha}(t)$, we obtain 
    \begin{equation} 
        f_{\alpha}'(t) = -  \frac{1}{\alpha} \cdot t^{-\alpha} + \frac{1}{\alpha}. \label{Eq_derivative_f_alpha} 
    \end{equation} 
    Thus, 
    \begin{align} 
        E_Q \big[ f_{\alpha}'(\phi)  \big] =  E_Q\left[  - \frac{1}{\alpha}\cdot \phi^{-\alpha} + \frac{1}{\alpha} \right]. \label{Eq_Exp_f_derivative} 
    \end{align} 

    From (\ref{Eq_conjugate_f_alpha}) and (\ref{Eq_derivative_f_alpha}), we have 
    \begin{align} 
        E_P \big[ f_{\alpha}^*(f_{\alpha}'(\phi))  \big]  &= E_P\left[ 
        \frac{1}{1 - \alpha} \cdot \left\{ 1 -  \alpha \cdot \left( - \frac{1}{\alpha} \cdot \phi^{-\alpha} 
        + \frac{1}{\alpha} \right)  \right\}^{1- \frac{1}{\alpha}} + \frac{1}{\alpha - 1} \right] \nonumber \\ 
        &= E_P \left[ \frac{1}{1 - \alpha} \cdot \phi^{1 - \alpha} + \frac{1}{\alpha - 1} \right]. \label{Eq_Exp_f_star_f_derivative} 
    \end{align} 
    In addition, from Equations (\ref{Eq_Exp_f_derivative}) and (\ref{Eq_Exp_f_star_f_derivative}), 
    we observe that $ E_P\big[\phi^{1 - \alpha}\big] < \infty$ is equivalent to 
    $E_P \big[\, \big|f_{\alpha}^*(f_{\alpha}'(\phi)) \big| \,\big] < \infty$. 
    Similarly, $E_Q\big[\phi^{-\alpha}\big] < \infty$ is equivalent to $E_Q \big[\, \big|f_{\alpha}'(\phi) \big|\, \big] < \infty$. 

    Finally, by substituting Equations (\ref{Eq_Exp_f_derivative}) and (\ref{Eq_Exp_f_star_f_derivative}) into Equation (\ref{Eq_Variational_representation_f_div_phi}), we get 
    \begin{align} 
        D(Q||P\, ;\, \alpha) &= \sup_{\phi \ge 0}\Big\{ E_Q\big[f_{\alpha}'(\phi)\big] - E_P\big[f_{\alpha}^*(f_{\alpha}'(\phi)) \big]\Big\} \nonumber \\ 
        &= \sup_{\phi \ge 0} 
        \left\{ 
        E_Q \left[  - \frac{1}{\alpha} \cdot \phi^{-\alpha} + \frac{1}{\alpha} \right] 
        - E_P \left[ \frac{1}{1 - \alpha} \cdot \phi^{1 - \alpha} + \frac{1}{\alpha - 1} \right] 
        \right\} \nonumber \\ 
        &= \sup_{\phi \ge 0} \left\{ 
        \frac{1}{\alpha\cdot(1-\alpha)} 	- \frac{1}{\alpha}\cdot E_{Q} \left[\phi^{-\alpha} \right] 
        - \frac{1}{1 - \alpha}\cdot E_{P} \left[\phi^{1 - \alpha} \right] 
        \right\}.	 \nonumber 
    \end{align} 

    This completes the proof. 
\end{proof}

\begin{theorem}[Theorem \ref{theorem_alpha_div_resp_in_gibbs_dinsity_form} in Section \ref{Section_DREusinganeuralnetworkwithanalphadivergenceloss} restated] \label{Appendix_theorem_alpha_div_resp_in_gibbs_dinsity_form} 
    The $\alpha$-divergence is represented as 
    \begin{equation} 
        D(Q||P\, ;\, \alpha) = \sup_{T:\Omega \rightarrow \mathbb{R}} \left\{ 
        \frac{1}{\alpha\cdot(1-\alpha)} - \frac{1}{\alpha} \cdot E_{Q} \left[e^{\alpha \cdot T} \right] 
        - \frac{1}{1- \alpha}\cdot E_{P} \left[ e^{(\alpha - 1) \cdot T} \right] 	\right\}, \label{Lemma_Eq_loss_func_alpha_div} 
    \end{equation} 
    where the supremum is taken over all measurable functions $T:\Omega \rightarrow \mathbb{R}$ with 
    $E_P[e^{(\alpha - 1) \cdot T}] < \infty$ and $E_{Q} [e^{\alpha \cdot T}] < \infty$. 
    The equality holds for $T^*$ satisfying 
    \begin{equation} 
        \frac{dQ}{dP} = e^{-T^*}.  \label{restated_Eq_gibbs_densty_minus} 
    \end{equation} 
\end{theorem}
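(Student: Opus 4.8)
The plan is to derive this representation directly from Theorem~\ref{theorem_alpha_div_resp} by means of the change of variable $\phi = e^{-T}$. First I would observe that $T \mapsto \phi = e^{-T}$ is a bijection between the measurable functions $T:\Omega \to \mathbb{R}$ and the strictly positive measurable functions $\phi:\Omega \to (0,\infty)$, with inverse $T = -\log \phi$. Under this substitution the exponents match exactly: since $\phi^{-\alpha} = e^{\alpha T}$ and $\phi^{1-\alpha} = e^{(\alpha-1)T}$, the objective functional of Theorem~\ref{theorem_alpha_div_resp},
\[
\frac{1}{\alpha(1-\alpha)} - \frac{1}{\alpha} E_Q[\phi^{-\alpha}] - \frac{1}{1-\alpha} E_P[\phi^{1-\alpha}],
\]
coincides termwise with the functional in~\eqref{Lemma_Eq_loss_func_alpha_div}. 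In particular the two integrability constraints $E_P[\phi^{1-\alpha}] < \infty$ and $E_Q[\phi^{-\alpha}] < \infty$ translate precisely into $E_P[e^{(\alpha-1)T}] < \infty$ and $E_Q[e^{\alpha T}] < \infty$, so the feasible sets correspond under the bijection.

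Next I would reconcile the domains of the two suprema. The supremum in Theorem~\ref{theorem_alpha_div_resp} ranges over all $\phi \ge 0$, whereas the bijection only covers strictly positive $\phi$. The key point is that the maximizer supplied by Theorem~\ref{theorem_alpha_div_resp} is $\phi^* = dQ/dP$, which under the standing assumption $p(\mathbf{x}) > 0 \Leftrightarrow q(\mathbf{x}) > 0$ is strictly positive $\mu$-almost everywhere on the common support. Consequently $\sup_{\phi > 0}(\cdot) \ge (\text{objective at } \phi^*) = D_\alpha(Q||P) = \sup_{\phi \ge 0}(\cdot) \ge \sup_{\phi > 0}(\cdot)$, which forces the restriction to strictly positive $\phi$ to leave the supremum unchanged. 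Combining this with the bijection of the previous step yields
\[
\sup_{T:\Omega \to \mathbb{R}} \left\{ \frac{1}{\alpha(1-\alpha)} - \frac{1}{\alpha} E_Q[e^{\alpha T}] - \frac{1}{1-\alpha} E_P[e^{(\alpha-1)T}] \right\} = D_\alpha(Q||P).
\]

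Finally, the optimizer follows by transporting $\phi^* = dQ/dP$ through the inverse map: setting $T^* = -\log \phi^* = -\log(dQ/dP)$ produces a function attaining the supremum, and equivalently $dQ/dP = e^{-T^*}$, which is exactly~\eqref{restated_Eq_gibbs_densty_minus}. I expect the only genuine subtlety to be the domain reconciliation of the second paragraph — confirming that passing from $\phi \ge 0$ to $\phi = e^{-T} > 0$ does not shrink the supremum — and this is resolved cleanly because the known maximizer $dQ/dP$ is already strictly positive almost everywhere under the problem's support assumption. The remaining steps are the routine algebraic verification of the exponent identities and the translation of the integrability conditions.
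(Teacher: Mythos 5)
Your proof takes essentially the same route as the paper's: the paper likewise substitutes $\phi = e^{-T}$ into the variational representation of Theorem~\ref{theorem_alpha_div_resp} and identifies the maximizer via $dQ/dP = e^{-T^*}$. Your second paragraph, reconciling $\sup_{\phi \ge 0}$ with $\sup_{\phi > 0}$ by noting that the maximizer $dQ/dP$ is already strictly positive almost everywhere, addresses a point the paper passes over silently, and your resolution of it is correct.
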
 
\begin{proof}[proof of Theorem \ref{theorem_alpha_div_resp_in_gibbs_dinsity_form}] 
    Substituting $e^{- T}$ into $\phi$ in Equation (\ref{lemma_Eq_alpha_variation}), we have 
    \begin{align} 
        D(Q||P\, ;\, \alpha) &= \sup_{\phi \ge 0} \left\{ 
        \frac{1}{\alpha\cdot(1-\alpha)} - \frac{1}{\alpha} \cdot E_{Q} \left[\phi^{-\alpha} \right]  - \frac{1}{1 - \alpha} \cdot E_{P} \left[\phi^{1 - \alpha} \right]  \right\} \nonumber \\ 
        &= \sup_{T:\Omega \rightarrow \mathbb{R}} \left\{ 
        \frac{1}{\alpha\cdot(1-\alpha)}  - \frac{1}{\alpha}\cdot E_{Q} \left[\left\{ e^{-T} \right\}^{-\alpha} \right] 
        - \frac{1}{1 - \alpha} \cdot E_{P} \left[\left\{ e^{-T}\right\}^{1 - \alpha} \right] 
        \right\} \nonumber \\ 
        &= \sup_{T:\Omega \rightarrow \mathbb{R}} \left\{ 
        \frac{1}{\alpha\cdot(1-\alpha)} 	- \frac{1}{\alpha}\cdot E_{Q} \left[e^{\alpha \cdot T} \right] 
        - \frac{1}{1- \alpha}\cdot E_{P} \left[ e^{(\alpha - 1) \cdot T} \right] 
        \right\}. \label{Proof_lemma_opti_in_gibbs_Eq_state} 
    \end{align} 
    Finally, from Theorem \ref{theorem_alpha_div_resp}, the equality for Equation (\ref{Proof_lemma_opti_in_gibbs_Eq_state}) holds if and only if 
    \begin{equation} 
        \frac{dQ}{dP} = e^{-T^*}. 
    \end{equation} 

    This completes the proof. 
\end{proof}

\begin{lemma} \label{lemma_loss_optimal_T_is_log_density_ratio} 
    For a measurable function $T:\Omega \rightarrow \mathbb{R}$ with $E_P[e^{(\alpha - 1) \cdot T}] < \infty$ and $E_{Q} [e^{\alpha \cdot T}] < \infty$, let 
    \begin{align} 
        \widetilde{l}_{\alpha\text{-Div}} \left(T(\mathbf{x})\,;\, \alpha \right) &=  \frac{1}{\alpha} \cdot 
        e^{\alpha \cdot T(\mathbf{x})} \cdot \frac{dQ}{d\mu}(\mathbf{x}) +  \frac{1}{1- \alpha} 
        \cdot e^{(\alpha - 1) \cdot T(\mathbf{x})} \cdot \frac{dP}{d\mu}(\mathbf{x}). \label{Eq_lemma_loss_optimal_T_is_log_density_ratio} 
    \end{align} 
    Then the optimal function $T^*$ for $\inf_{T:\Omega \rightarrow \mathbb{R} } \widetilde{l}_{\alpha\text{-Div}} \left(T\,;\, \alpha \right)$ is obtained as $T^*= - \log  dQ/dP$, $\mu$-almost everywhere. 
\end{lemma}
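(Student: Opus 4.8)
The plan is to exploit the fact that the functional $\widetilde{l}_{\alpha}(T(\mathbf{x}))$ depends on the function $T$ only through its value at the single point $\mathbf{x}$, so that the minimization over all measurable $T:\Omega \to \mathbb{R}$ decouples into an independent one-dimensional minimization at each $\mathbf{x}$. Concretely, I would fix $\mathbf{x} \in \Omega$, write $a = \frac{dQ}{d\mu}(\mathbf{x}) \ge 0$ and $b = \frac{dP}{d\mu}(\mathbf{x}) \ge 0$, and study the scalar function
\begin{equation}
g(t) = \frac{1}{\alpha} a \, e^{\alpha t} + \frac{1}{1-\alpha} b \, e^{(\alpha-1)t}, \qquad t \in \mathbb{R}. \nonumber
\end{equation}
The pointwise minimizer $t^*(\mathbf{x})$ of $g$ then defines the candidate optimal function $T^*(\mathbf{x}) = t^*(\mathbf{x})$, and because the minimization is pointwise this $T^*$ minimizes $\widetilde{l}_{\alpha}$ at $\mu$-almost every $\mathbf{x}$ simultaneously.

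First I would differentiate and factor:
\begin{equation}
g'(t) = a\, e^{\alpha t} - b\, e^{(\alpha-1)t} = e^{(\alpha-1)t}\left( a\, e^{t} - b \right). \nonumber
\end{equation}
The key simplification is that the prefactor $e^{(\alpha-1)t}$ is strictly positive for every $t$, so the sign of $g'$ is governed entirely by $a\,e^{t} - b$, independently of whether $\alpha > 1$, $0<\alpha<1$, or $\alpha<0$. Assuming $a,b > 0$, this shows $g' < 0$ for $t < \log(b/a)$ and $g' > 0$ for $t > \log(b/a)$, hence $g$ attains a unique global minimum at $t^* = \log(b/a)$. I regard this uniform sign argument as the cleanest route, since the naive approach of checking convexity through $g''$ is complicated by the fact that the coefficients $1/\alpha$ and $1/(1-\alpha)$ change sign with $\alpha$; the factorization sidesteps that entirely.

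It then remains to translate $t^* = \log(b/a)$ back into Radon--Nikod\'ym derivatives. Using the chain rule $\frac{dP}{dQ} = \big(\frac{dP}{d\mu}\big)\big/\big(\frac{dQ}{d\mu}\big)$, valid $\mu$-almost everywhere where the denominator is positive, I obtain $t^* = \log\frac{dP}{dQ} = -\log\frac{dQ}{dP}$, which is exactly the claimed $T^*$. The main bookkeeping obstacle is the handling of the $\mu$-null technicalities: I must rule out the set where $a$ or $b$ vanishes, which is where the standing assumption $p(\mathbf{x})>0 \Leftrightarrow q(\mathbf{x})>0$ (equivalently $\frac{dP}{d\mu}>0 \Leftrightarrow \frac{dQ}{d\mu}>0$ at $\mu$-a.e.\ $\mathbf{x}$) enters, guaranteeing that $a$ and $b$ are simultaneously positive off a $\mu$-null set so that $\log(b/a)$ is well defined there. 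Finally, I would note measurability of $\mathbf{x} \mapsto t^*(\mathbf{x})$ (immediate, as a composition of measurable maps) and that integrating $\widetilde{l}_{\alpha}(T^*)$ against $\mu$ recovers $\inf_T E[\mathcal{L}_{\alpha}(T)]$, confirming the pointwise decoupling was legitimate.
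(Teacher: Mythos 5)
Your proposal is correct, and it reaches the conclusion by a genuinely different route from the paper. The paper also reduces to a pointwise scalar problem, but it resolves that problem via Jensen's inequality: writing $X = e^{\alpha T(\mathbf{x})}\frac{dQ}{d\mu}(\mathbf{x})$, $Y = e^{(\alpha-1)T(\mathbf{x})}\frac{dP}{d\mu}(\mathbf{x})$ and weights $p = 1-\alpha$, $q = \alpha$, it observes that $pX+qY$ is (up to the factor $\alpha(1-\alpha)$) the quantity $\widetilde{l}_{\alpha}(T)$, that the Jensen lower bound $p\log X + q\log Y = (1-\alpha)\log\frac{dQ}{d\mu} + \alpha\log\frac{dP}{d\mu}$ is independent of $T$, and that equality holds iff $X=Y$, which yields $e^{-T^*} = dQ/dP$. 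Your first-order sign analysis of $g'(t) = e^{(\alpha-1)t}(a e^{t}-b)$ buys something concrete that the paper's argument does not: it is valid uniformly for all $\alpha \in \mathbb{R}\setminus\{0,1\}$, whereas the weighted AM--GM step requires the weights $1-\alpha$ and $\alpha$ to be positive, i.e.\ $\alpha \in (0,1)$ (a restriction the paper does not remark on, even though the lemma is invoked for general $\alpha$). Your version also makes the uniqueness of the minimizer and the global (not merely critical-point) nature of the minimum explicit. The paper's version, in exchange, packages the computation into a single inequality and avoids any case analysis on the sign of the coefficients $1/\alpha$ and $1/(1-\alpha)$ --- exactly the pitfall your factorization was designed to sidestep. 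Both treatments handle the $\mu$-null set where $\frac{dP}{d\mu}$ or $\frac{dQ}{d\mu}$ vanishes only implicitly; your explicit appeal to the standing assumption $p>0 \Leftrightarrow q>0$ is a small improvement in rigor.
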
 
\begin{proof}[proof of Lemma \ref{lemma_loss_optimal_T_is_log_density_ratio}] 
    First, note that it follows from Jensen's inequality that 
    \begin{equation} 
        \log(p \cdot X + q \cdot Y) \ge p \cdot \log(X) + q \cdot \log(Y), \label{Eq_Jensensinequality} 
    \end{equation} 
    for $X, Y > 0$ and $p, q > 0$ with $p + q = 1$, and equality holds when $X = Y$. 

    Substitute $X = e^{\alpha \cdot T(\mathbf{x})} \cdot \frac{dQ}{d\mu}(\mathbf{x})$, 
    $Y = e^{(\alpha - 1) \cdot T(\mathbf{x})} \cdot \frac{dP}{d\mu}(\mathbf{x})$, 
    $p = 1 - \alpha$, and $q = \alpha$ into Equation (\ref{Eq_Jensensinequality}), we obtain 
    \begin{equation} 
        \log(p \cdot X + q \cdot Y) = \log\left(\frac{1}{\alpha \cdot (1-\alpha)} \cdot \widetilde{l}_{\alpha\text{-Div}}(T\,;\, \alpha)\right), \nonumber 
    \end{equation} 
    and $ \log\left(\frac{1}{\alpha \cdot (1-\alpha)} \cdot \widetilde{l}_{\alpha\text{-Div}}(T\,;\, \alpha)\right)$ is minimized when 
    $e^{\alpha \cdot T(\mathbf{x})} \cdot \frac{dQ}{d\mu}(\mathbf{x}) = e^{(\alpha - 1) \cdot T(\mathbf{x})} \cdot \frac{dP}{d\mu}(\mathbf{x})$, 
    $\mu$-almost everywhere. 
    Therefore, $\inf_{T:\Omega \rightarrow \mathbb{R}} \widetilde{l}_{\alpha\text{-Div}}(T\,;\, \alpha)$ is achieved at $e^{- T^*(\mathbf{x})} = \frac{dQ}{dP}(\mathbf{x})$, $\mu$-almost everywhere. 
    Thus, we obtain $T^*(\mathbf{x}) = - \log\frac{dQ}{dP}(\mathbf{x})$, $\mu$-almost everywhere. 

    This completes the proof. 
\end{proof}

\begin{lemma} \label{lemma_loss_not_biased_lemma} 
    For a measurable function $T:\Omega \rightarrow \mathbb{R}$ with $E_P[e^{(\alpha - 1) \cdot T}] < \infty$ and $E_{Q} [e^{\alpha \cdot T}] < \infty$, let 
    \begin{align} 
        \widebar{\mathcal{L}}_{\alpha\text{-Div}}(T\,;\,\alpha) &= E_{\mu}\left[ \widetilde{l}_{\alpha\text{-Div}} \left(T(\mathbf{x}\,;\, \alpha) \right) \right] \label{Eq_lemma_loss_not_biased_lemma_loss_T_z} \\ 
        &=  \frac{1}{\alpha} \cdot E_{Q} \left[ e^{\alpha \cdot T(\mathbf{x})} \right] +  \frac{1}{1 - \alpha} \cdot E_{P} \left[ e^{(\alpha - 1) \cdot T(\mathbf{x})} \right],   \label{Eq_lemma_loss_not_biased_lemma_loss_T} 
    \end{align} 
    and let 
    \begin{align} 
        \widetilde{l}_{\alpha\text{-Div}}^*(\alpha) &= \inf_{T:\Omega \rightarrow \mathbb{R} }\widetilde{l}_{\alpha\text{-Div}}(T\,;\, \alpha), \quad \text{and} \label{eq_lemma_loss_not_biased_lemma_inf_l}\\ 
        \widebar{\mathcal{L}}_{\alpha\text{-Div}}^*(\alpha) &=  \inf_{T:\Omega \rightarrow \mathbb{R} }\widebar{\mathcal{L}}_{\alpha\text{-Div}}(T\,;\,\alpha), \label{eq_lemma_loss_not_biased_lemma_inf_L} 
    \end{align} 
    where the infima of Equations (\ref{eq_lemma_loss_not_biased_lemma_inf_l}) and (\ref{eq_lemma_loss_not_biased_lemma_inf_L}) are considered over measurable functions $T:\Omega \rightarrow \mathbb{R}$ with $E_P[e^{(\alpha - 1) \cdot T}] < \infty$ and $E_Q[e^{\alpha \cdot T}] < \infty$. 

    Then, 
    \begin{equation} 
        E_{\mu} \left[ \widetilde{l}_{\alpha\text{-Div}}^*(\alpha) \right] = \widebar{\mathcal{L}}_{\alpha\text{-Div}}^*(\alpha). \label{Lemma_Eq_state_inf_t_l_t} 
    \end{equation} 
    Additionally, the equality in Equations (\ref{eq_lemma_loss_not_biased_lemma_inf_l}) and (\ref{eq_lemma_loss_not_biased_lemma_inf_L}) hold for $T^*(\mathbf{x}) = - \log  dQ/dP(\mathbf{x})$. 
\end{lemma} 
\begin{proof}[proof of Lemma \ref{lemma_loss_not_biased_lemma}] 
    Let  $T^*(\mathbf{x}) = - \log  dQ/dP(\mathbf{x})$. 

    First, it follows from Lemma \ref{lemma_loss_optimal_T_is_log_density_ratio} that 
    \begin{equation} 
        \widetilde{l}_{\alpha\text{-Div}}^*(\alpha) = \inf_{T:\Omega \rightarrow \mathbb{R}} \widetilde{l}_{\alpha\text{-Div}}(T\,;\, \alpha) = \widetilde{l}_{\alpha\text{-Div}}(T^*\,;\, \alpha). 
        \label{proof_lemmma_loss_not_biased_lemma_to_get_upper_bound_x1} 
    \end{equation} 

    Next, we obtain 
    \begin{align} 
        E_{\mu} \left[  \frac{1}{\alpha\cdot(1-\alpha)} -  \widetilde{l}_{\alpha\text{-Div}}^*(\alpha) \right] 
        &= \int \left\{ \frac{1}{\alpha\cdot(1-\alpha)} -  \widetilde{l}_{\alpha\text{-Div}}^* (\alpha)\right\} d\mu \nonumber\\ 
        &=  \int 
        \left\{ \frac{1}{\alpha\cdot(1-\alpha)} - \inf_{T:\Omega \rightarrow \mathbb{R} } \widetilde{l}_{\alpha\text{-Div}}(T\,;\, \alpha)\right\} d\mu \nonumber\\ 
        &= \int \sup_{T:\Omega \rightarrow \mathbb{R} } 
        \left\{ \frac{1}{\alpha\cdot(1-\alpha)} - \widetilde{l}_{\alpha\text{-Div}}(T\,;\, \alpha) \right\} d\mu. 
        \label{proof_lemmma_loss_not_biased_lemma_to_get_upper_bound} 
    \end{align} 


    Let $T_k = T^* + 1/k$. 
    Note that, 
    \begin{equation} 
        \lim_{k \rightarrow \infty}\widetilde{l}_{\alpha\text{-Div}}(T_k\,;\, \alpha) 
        = \inf_{T:\Omega \rightarrow \mathbb{R}} \widetilde{l}_{\alpha\text{-Div}}(T\,;\, \alpha) = \widetilde{l}_{\alpha\text{-Div}}^*(\alpha). 
    \end{equation} 
    Then,  we have 
    \begin{align} 
        &\lim_{k \rightarrow \infty}  \bigg\{ \frac{1}{\alpha\cdot(1-\alpha)} - \widetilde{l}_{\alpha\text{-Div}}(T_k\,;\, \alpha) \bigg\} \nonumber\\ 
        &= \frac{1}{\alpha\cdot(1-\alpha)} - \lim_{k \rightarrow \infty}   \bigg\{ \widetilde{l}_{\alpha\text{-Div}}(T_k\,;\, \alpha)\bigg\} \nonumber\\ 
        &= \frac{1}{\alpha\cdot(1-\alpha)} - \inf_{T:\Omega \rightarrow \mathbb{R}} \widetilde{l}_{\alpha\text{-Div}}(T\,;\, \alpha) \nonumber\\ 
        &=  \sup_{T:\Omega \rightarrow \mathbb{R}} \bigg\{ \frac{1}{\alpha\cdot(1-\alpha)}  -  \widetilde{l}_{\alpha\text{-Div}}(T\,;\, \alpha) \bigg\}.  \label{Eq_Lemma_loss_not_biased_lemma_N_left_1} 
    \end{align} 

    From Theorem \ref{Appendix_theorem_alpha_div_resp_in_gibbs_dinsity_form}, we have 
    \begin{align} 
       & \lim_{k \rightarrow \infty}  E_{\mu} \left[ \frac{1}{\alpha\cdot(1-\alpha)} - \widetilde{l}_{\alpha\text{-Div}}(T_k\,;\, \alpha) \right] \nonumber\\ 
        &= \frac{1}{\alpha\cdot(1-\alpha)} - \lim_{k \rightarrow \infty}    E_{\mu} \left[ \widetilde{l}_{\alpha\text{-Div}}(T_k\,;\, \alpha) \right] \nonumber\\ 
        &= \frac{1}{\alpha\cdot(1-\alpha)} - \lim_{k \rightarrow \infty} \left\{ \frac{1}{\alpha} \cdot E_{Q} \left[e^{\alpha \cdot T_k} \right]  +   \frac{1}{1- \alpha} \cdot E_{P} \left[ e^{(\alpha - 1) \cdot T_k} \right]  \right\} \nonumber\\ 
        &= \frac{1}{\alpha\cdot(1-\alpha)} - \inf_{T:\Omega \rightarrow \mathbb{R}} \left\{ \frac{1}{\alpha} \cdot E_{Q} \left[e^{\alpha \cdot T} \right]  +   \frac{1}{1- \alpha} \cdot E_{P} \left[ e^{(\alpha - 1) \cdot T} \right]  \right\} \nonumber\\ 
        &= \frac{1}{\alpha\cdot(1-\alpha)} - \inf_{T:\Omega \rightarrow \mathbb{R}} 
        E_{\mu} \left[ \widetilde{l}_{\alpha\text{-Div}}(T\,;\, \alpha)\right] \nonumber\\ 
        &= \sup_{T:\Omega \rightarrow \mathbb{R}}  \left\{\frac{1}{\alpha\cdot(1-\alpha)} - E_{\mu} \left[ \widetilde{l}_{\alpha\text{-Div}}(T\,;\, \alpha)\right] \right\} \nonumber\\ 
        &= \sup_{T:\Omega \rightarrow \mathbb{R}}  E_{\mu} \left[ \frac{1}{\alpha\cdot(1-\alpha)} - \widetilde{l}_{\alpha\text{-Div}}(T\,;\, \alpha) \right].  \label{Eq_Lemma_loss_not_biased_lemma_N_left_2} 
    \end{align} 

    Now, we have 
    \begin{align} 
        &\left| \frac{1}{\alpha\cdot(1-\alpha)} - \widetilde{l}_{\alpha\text{-Div}}(T_k\,;\, \alpha) \right| \nonumber\\ 
        &= \ \left| \frac{1}{\alpha\cdot(1-\alpha)} - \frac{1}{\alpha} \cdot \left( \frac{dQ}{dP} \left(\mathbf{x}\right)\right)^{\alpha} \cdot \frac{dQ}{d\mu} \left(\mathbf{x}\right) \cdot e^{\frac{\alpha}{k}} \right. \nonumber\\ 
        &\  \qquad \left. - \frac{1}{1- \alpha}\cdot  \left(\frac{dQ}{dP} \left(\mathbf{x}\right)\right)^{\alpha - 1} \cdot \frac{dP}{d\mu} \left(\mathbf{x}\right) \cdot e^{\frac{\alpha - 1}{k}} \right| \nonumber \\ 
        &\leq \ \frac{1}{\alpha\cdot(1-\alpha)} 
        + \frac{1}{\alpha} \cdot e^{\frac{\alpha}{k}} \cdot \left(\frac{dQ}{dP} \left(\mathbf{x}\right) \right)^{\alpha} \cdot \frac{dQ}{d\mu} \left(\mathbf{x}\right) \nonumber\\ 
        & \ \qquad + \frac{1}{1-\alpha} \cdot e^{\frac{\alpha - 1}{k}} \cdot \left( \frac{dQ}{dP} \left(\mathbf{x}\right)\right)^{\alpha - 1} \cdot \frac{dP}{d\mu} \left(\mathbf{x}\right). \nonumber \\ 
        &=  \frac{1}{\alpha\cdot(1-\alpha)} +  \left\{ \frac{1}{\alpha} \cdot e^{\frac{\alpha}{k}}  +  \frac{1}{1-\alpha} \cdot e^{\frac{\alpha - 1}{k}}    \right\} \cdot \left( \frac{dQ}{dP} \left(\mathbf{x}\right)\right)^{\alpha - 1} \cdot \frac{dP}{d\mu}\left(\mathbf{x}\right), 
          \label{Lemma_loss_not_biased_lemma_integral_is_finte} 
    \end{align} 
    and let $\phi(\mathbf{x})$ denote the term on the right hand side of Equation (\ref{Lemma_loss_not_biased_lemma_integral_is_finte}). 

    Then, we observe that 
    \begin{equation} 
        \left|\frac{1}{\alpha\cdot(1-\alpha)} - \widetilde{l}_{\alpha\text{-Div}}(T_k(\mathbf{x})\,;\, \alpha) \right| \leq \phi(\mathbf{x}) \quad \text{and}  \quad E_{\mu} \big[\phi(\mathbf{x})\big] < \infty. 
        \nonumber 
    \end{equation} 
    That is, the following sequence is uniformly integrable for $\mu$: 
    \begin{equation} 
        \left\{ \frac{1}{\alpha\cdot(1-\alpha)} - \widetilde{l}_{\alpha\text{-Div}}( T_k\,;\, \alpha) \right\}_{k=1}^{\infty}. 
        \nonumber 
    \end{equation} 
    Thus, from the property of the Lebesgue integral (\citeauthor{shiryaev1995probability}, P188, Theorem 4), we have 
    \begin{equation} 
        E_{\mu} \left[ \lim_{k \rightarrow \infty} \left\{ \frac{1}{\alpha\cdot(1-\alpha)} - \widetilde{l}_{\alpha\text{-Div}}( T_k\,;\, \alpha) \right\} \right] 
        = \lim_{k \rightarrow \infty} E_{\mu} \left[ \frac{1}{\alpha\cdot(1-\alpha)} - \widetilde{l}_{\alpha\text{-Div}}( T_k\,;\, \alpha) \right]. \label{exchange_integral_sup} 
    \end{equation}

    From Equations (\ref{Eq_Lemma_loss_not_biased_lemma_N_left_1}), (\ref{Eq_Lemma_loss_not_biased_lemma_N_left_2}) and (\ref{exchange_integral_sup}), we 
    obtain 
    \begin{align} 
        \frac{1}{\alpha\cdot(1-\alpha)} - E_{\mu} \left[  \widetilde{l}_{\alpha\text{-Div}}^*(\alpha) \right] 
        &=  E_{\mu} \left[  \frac{1}{\alpha\cdot(1-\alpha)} -  \widetilde{l}_{\alpha\text{-Div}}^*(\alpha)  \right] \nonumber  \\ 
        &=  E_{\mu} \left[ \sup_{T:\Omega \rightarrow \mathbb{R} } 
        \bigg\{ \frac{1}{\alpha\cdot(1-\alpha)}  -   \widetilde{l}_{\alpha\text{-Div}}(T\,;\, \alpha) 
        \bigg\}  \ \right] \nonumber  \\ 
        &= E_{\mu} \left[   \lim_{k \rightarrow \infty} \left\{ \frac{1}{\alpha\cdot(1-\alpha)} -  \widetilde{l}_{\alpha\text{-Div}}(T_k\,;\, \alpha) \right\} \right] \nonumber  \\ 
        & \qquad \qquad \qquad \qquad \qquad \qquad \qquad \qquad \qquad \quad \text{($\therefore$ Equation (\ref{Eq_Lemma_loss_not_biased_lemma_N_left_1}))} \nonumber \\ 
        &= \lim_{k \rightarrow \infty} E_{\mu} \left[  \frac{1}{\alpha\cdot(1-\alpha)} -  \widetilde{l}_{\alpha\text{-Div}}(T_k\,;\, \alpha) \right] 
        \qquad \text{($\therefore$  Equation (\ref{exchange_integral_sup}))} \nonumber\\ 
        \nonumber\\ 
        &= \sup_{T:\Omega \rightarrow \mathbb{R} }\left\{ E_{\mu} \left[ \frac{1}{\alpha\cdot(1-\alpha)}  -  \widetilde{l}_{\alpha\text{-Div}}(T_k\,;\, \alpha) \right]  \right\} \nonumber\\ 
        & \qquad \qquad \qquad \qquad \qquad \qquad \qquad \qquad \text{($\therefore$  Equation (\ref{Eq_Lemma_loss_not_biased_lemma_N_left_2}))} \nonumber \\ 
        &= \frac{1}{\alpha\cdot(1-\alpha)}  -   \inf_{T:\Omega \rightarrow \mathbb{R} } E_{\mu} \left[ \widetilde{l}_{\alpha\text{-Div}}(T_k\,;\, \alpha) \right] \nonumber \\ 
        &= \frac{1}{\alpha\cdot(1-\alpha)}  -  \widebar{\mathcal{L}}_{\alpha\text{-Div}}^*(\alpha). \nonumber 
    \end{align} 

    Here, we have 
    \begin{equation} 
        E_{\mu} \left[ \widetilde{l}_{\alpha\text{-Div}}^*(\alpha) \right] = \widebar{\mathcal{L}}_{\alpha\text{-Div}}^*(\alpha).  \label{Lemma_loss_not_biased_lemma_N_for_final_state_left_1} 
    \end{equation} 

    From Equations 
    (\ref{Eq_lemma_loss_not_biased_lemma_loss_T_z}) and 
     (\ref{Lemma_loss_not_biased_lemma_N_for_final_state_left_1}, we have 
    \begin{equation} 
          \widebar{\mathcal{L}}_{\alpha\text{-Div}}(T^*\,;\, \alpha) =  E_{\mu} \left[ \widetilde{l}_{\alpha\text{-Div}}^*(\alpha) \right] = \widebar{\mathcal{L}}_{\alpha\text{-Div}}^*(\alpha). 
        \label{proof_lemmma_loss_not_biased_lemma_to_get_upper_bound_x2} 
    \end{equation}

    This completes the proof. 
\end{proof} 

\begin{theorem}[Theorem \ref{theorem_alpha_div_loss_in_const_shift} in Section \ref{Section_DREusinganeuralnetworkwithanalphadivergenceloss} restated] \label{Appendix_theorem_alpha_div_loss_in_const_shift} 
    For a fixed function $T:\Omega \rightarrow \mathbb{R}$, 
    let  $c_*$ be the optimal scalar value for the following infimum: 
    \begin{align} 
        c_* &=  \arg \inf_{c \in \mathbb{R}} E[\mathcal{L}_{\alpha\text{-Div}}(T + c\, ;\, \alpha)] \nonumber \\ 
        &=  \arg \inf_{c \in \mathbb{R}} \left\{ 
        \frac{1}{\alpha} E_{Q} \left[e^{\alpha \cdot(T + c)} \right]  \right. \nonumber \\ 
        & \left. 
        \quad \qquad   +  \quad \frac{1}{1- \alpha} E_{P} \left[ e^{(\alpha - 1) \cdot (T + c)} \right] \right\}, 
        \label{Eq_apdx_theorem_alpha_div_loss_for_training_in_const_shift} 
    \end{align} 
    Then,  $c_*$ satisfies $e^{c_*} = E_{P} \left[ e^{- T} \right]$, or 
    equivalently, $e^{- (T + c_*)} = e^{- T} / E_{P} \left[ e^{- T} \right]$. 
\end{theorem} 
\begin{proof}[proof of Theorem \ref{Appendix_theorem_alpha_div_loss_in_const_shift}] 
    Now, we have 
    \begin{equation} 
        \widetilde{l}_{\alpha\text{-Div}}(T + c\,;\, \alpha) = 
        \frac{1}{\alpha}  \cdot e^{\alpha \cdot c} \cdot e^{\alpha \cdot T(\mathbf{x})} \cdot \frac{dQ}{d\mu}(\mathbf{x}) 
        + \frac{1}{1 - \alpha}  \cdot e^{(\alpha - 1) \cdot c} \cdot e^{(\alpha - 1)  \cdot T(\mathbf{x})} \cdot \frac{dP}{d\mu}(\mathbf{x}). \nonumber 
    \end{equation} 

    For Equation (\ref{Eq_Jensensinequality}), let 
    $X = e^{\alpha \cdot c} \cdot e^{\alpha \cdot T(\mathbf{x})} \cdot  \frac{dQ}{d\mu}(\mathbf{x})$, 
    $Y = e^{(\alpha - 1) \cdot c} \cdot e^{(\alpha - 1)\cdot T(\mathbf{x})} \cdot \frac{dP}{d\mu}(\mathbf{x})$, 
    $p=1 - \alpha$ and $q=\alpha$. 
    Then, from Jensen's inequality, $\widetilde{l}_{\alpha\text{-Div}} \left(T + c\,;\, \alpha\right)$ is minimized at $c_*$ 
    such taht $e^{\alpha \cdot c_*} \cdot e^{\alpha \cdot T(\mathbf{x})} \cdot  \frac{dQ}{d\mu}(\mathbf{x}) = e^{(\alpha - 1) \cdot c_*} \cdot e^{(\alpha - 1)\cdot T(\mathbf{x})} \cdot \frac{dP}{d\mu}(\mathbf{x})$, 
    $\mu$-almost everywhere. 

    Hence, 
    \begin{equation} 
        e^{c_*} \cdot \frac{dQ}{d\mu}(\mathbf{x}) =  e^{- T(\mathbf{x})} \cdot \frac{dP}{d\mu}(\mathbf{x}). \nonumber 
    \end{equation} 
    By integrating both sides of the above equality over $\Omega$ with $\mu$, we obtain 
    \begin{equation} 
        e^{c_*} =  E_P\left[ e^{- T}  \right]. \nonumber 
    \end{equation} 

    This completes the proof. 
\end{proof}

\subsection{Proofs for Section \ref{Section_TheoreticaljustificationsofalphaDiv}}\label{Appendix_Section_proof_TheoreticaljustificationsofalphaDiv} 
In this section, we provide the theorems and the proofs referenced to in Section \ref{Section_TheoreticaljustificationsofalphaDiv}. 

\begin{theorem}[Theorem \ref{theorem_gradient_unbiased} in Section \ref{Section_TheoreticaljustificationsofalphaDiv} restated] \label{Appendixtheorem_gradient_unbiased} 
    Let $T_{\theta}(\mathbf{x}):\Omega \rightarrow \mathbb{R}$ be a function such that 
    the map $\theta = (\theta_1,\theta_2, \ldots, \theta_p) \in \Theta \mapsto T_{\theta}(\mathbf{x})$ is differentiable for all $\theta$ and for $\mu$-almost every $\mathbf{x} \in \Omega$. 
    Assume for a point $\bar{\theta} \in \Theta$, it holds that 
    $E_{P}[e^{(\alpha -1) \cdot T_{\bar{\theta}}}] < \infty$ and $E_{Q}[e^{\alpha \cdot T_{\bar{\theta}}}] < \infty$, 
    and there exists a compact neighborhood of $\bar{\theta}$, denoted by $B_{\bar{\theta}}$, 
    and a constant value $L$, such that $|T_{\psi}(\mathbf{x}) - T_{\bar{\theta}}(\mathbf{x})| < L \|\psi - \bar{\theta}\|$. 

    Then, 
    \begin{equation} 
        E \left[ \nabla_{\theta} \,\mathcal{L}_{\alpha\text{-Div}}^{(R,S)}(T\,;\, \alpha) \Big|_{\theta = \bar{\theta}} \right] 
        =  \nabla_{\theta} \, E \left[ \mathcal{L}_{\alpha\text{-Div}}^{(R,S)}(T\,;\, \alpha)   \right] \Big|_{\theta = \bar{\theta}}. \label{Prop_Eq_state} 
    \end{equation} 
\end{theorem} 
\begin{proof}[proof of Theorem \ref{Appendixtheorem_gradient_unbiased}] 
    We now consider the values, as $\psi \rightarrow \bar{\theta}$, of the following two integrals: 
    \begin{equation} 
        \int  \frac{1}{\|\psi - \bar{\theta}\|} \left\{ \frac{1}{\alpha}  e^{\alpha \cdot T_{\psi}} - \frac{1}{\alpha} e^{\alpha \cdot T_{\bar{\theta}}}  \right\}  dQ, 
        \label{proof_propos_grad_non_bias_q} 
    \end{equation} 
    and 
    \begin{equation} 
        \int  \frac{1}{\|\psi - \bar{\theta}\|} \left\{ \frac{1}{1-\alpha}  e^{(\alpha-1) \cdot T_{\psi}} - \frac{1}{1-\alpha} e^{(\alpha-1) \cdot T_{\bar{\theta}}}  \right\} dP. 
        \label{proof_propos_grad_non_bias_p} 
    \end{equation} 

    Note that it follows from the intermediate value theorem that 
    \begin{equation} 
        \left|  \frac{1}{\alpha}  e^{\alpha \cdot x} - \frac{1}{\alpha} e^{\alpha \cdot y} \right| 
        = \big|x-y\big| \cdot  e^{\alpha \cdot \{ y + \tau \cdot (x-y)\} }  \quad (\, \exists \tau \in [0, 1] \,). \label{proof_propos_grad_non_bias_eq_intermediate_theorem} 
    \end{equation} 

    By using the above equation with $x=T_{\psi}(\mathbf{x})$ and $y=T_{\bar{\theta}}(\mathbf{x})$ for the integrand of Equation (\ref{proof_propos_grad_non_bias_q}), we have 
    \begin{align} 
        &\left|  \frac{1}{\|\psi - \bar{\theta}\|} \cdot \left\{ \frac{1}{\alpha}  e^{\alpha \cdot T_{\psi}(\mathbf{x})} 
        - \frac{1}{\alpha} e^{\alpha \cdot T_{\bar{\theta}}(\mathbf{x})}  \right\} \right| \nonumber\\ 
        &=   \frac{1}{\|\psi - \bar{\theta}\|}  \cdot \big|T_{\psi}(\mathbf{x}) - T_{\bar{\theta}} (\mathbf{x})\big| 
        \cdot  e^{\alpha \cdot \{ T_{\bar{\theta}}(\mathbf{x}) +  \tau_{\mathbf{x}}  \cdot (T_{\psi}(\mathbf{x}) - T_{\bar{\theta}}(\mathbf{x}))  \}} 
        \quad \qquad (\, \tau_{\mathbf{x}} \in [0, 1] \,)  \nonumber\\ 
        &=   \frac{1}{\|\psi - \bar{\theta}\|} \cdot  \big|T_{\psi}(\mathbf{x}) - T_{\bar{\theta}}(\mathbf{x}) \big| 
        \cdot e^{\alpha  \cdot  \tau_{\mathbf{x}}  \cdot(T_{\psi}(\mathbf{x}) - T_{\bar{\theta}}(\mathbf{x}))} 
        \cdot  e^{\alpha \cdot T_{\bar{\theta}}(\mathbf{x})}   \nonumber\\ 
        &\leq   \frac{1}{\|\psi - \bar{\theta}\|}  \cdot \big|T_{\psi}(\mathbf{x}) - T_{\bar{\theta}}(\mathbf{x}) \big| 
        \cdot e^{\alpha    \tau_{\mathbf{x}}  |T_{\psi}(\mathbf{x}) - T_{\bar{\theta}}(\mathbf{x})|} 
        \cdot  e^{\alpha \cdot T_{\bar{\theta}}(\mathbf{x})}   \nonumber\\ 
        &\leq   L  \cdot e^{\alpha \cdot  L \cdot \| \psi - \bar{\theta} \|} \cdot  e^{\alpha \cdot T_{\bar{\theta}}(\mathbf{x})},  \label{proof_propos_grad_non_bias_to_get_upper_bound_q_1} 
    \end{align} 
    for all $\psi \in B_{\bar{\theta}}$. 

    Integrating the term on the left-hand side of Equation (\ref{proof_propos_grad_non_bias_to_get_upper_bound_q_1}) with respect to $Q$, we have 
    \begin{align} 
        &\int \left| 
        \frac{1}{\|\psi - \bar{\theta}\|} \cdot 
        \left\{ \frac{1}{\alpha} \cdot e^{\alpha \cdot T_{\psi}(\mathbf{x}^q)} - \frac{1}{\alpha} \cdot 
        e^{\alpha \cdot T_{\bar{\theta}}(\mathbf{x}^q)}  \right\} \right| 
        dQ(\mathbf{x}^q) \nonumber\\ 
        &\leq  \int   L  \cdot e^{\alpha \cdot  L \cdot \| \psi - \bar{\theta} \|} \cdot  e^{\alpha \cdot T_{\bar{\theta}}(\mathbf{x}^q)} dQ(\mathbf{x}^q) \nonumber\\ 
        &=   L  \cdot e^{\alpha \cdot L \cdot \| \psi - \bar{\theta} \|} \cdot E_Q\left[e^{\alpha \cdot T_{\bar{\theta}}} \right].    \label{proof_propos_grad_non_bias_to_get_upper_bound_q_2} 
    \end{align} 
    Considering the supremum for $\psi \in B_{\bar{\theta}}$ in Equation (\ref{proof_propos_grad_non_bias_to_get_upper_bound_q_2}), we obtain 
    \begin{align} 
        &\sup_{\psi \in B_{\bar{\theta}}} \Bigg\{ 
        \int \left|  \frac{1}{\|\psi - \bar{\theta}\|} \cdot 
        \left\{ \frac{1}{\alpha} \cdot e^{\alpha \cdot T_{\psi}} - \frac{1}{\alpha} e^{\alpha \cdot T_{\bar{\theta}}}  \right\} \right| dQ 
        \Bigg\} \nonumber\\ 
        &\leq \sup_{\psi \in B_{\bar{\theta}}} \bigg\{  L  \cdot e^{\alpha \cdot L \cdot \| \psi - \bar{\theta} \|} \,\cdot  E_Q\left[e^{\alpha \cdot T_{\bar{\theta}}} \right]  \bigg\} \nonumber\\ 
        &=  E_Q\left[e^{\alpha \cdot T_{\bar{\theta}}} \right]  \cdot \sup_{\psi \in B_{\bar{\theta}}} L  \cdot e^{\alpha \cdot  L \cdot \| \psi - \bar{\theta} \|} < \infty, \label{proof_propos_grad_non_bias_uniformaly_int_q} 
    \end{align} 
    since $B_{\bar{\theta}}$ is compact. 

    Therefore, the following set is uniformly integrable for $Q$: 
    \begin{equation} 
        \left\{ 
        \frac{1}{\|\psi - \bar{\theta}\|} 
        \left\{ 
        \frac{1}{\alpha} \cdot e^{\alpha \cdot T_{\psi}(\mathbf{x}^q)} - \frac{1}{\alpha} 
        e^{\alpha \cdot T_{\bar{\theta}}(\mathbf{x}^q)} 
        \right\} 
        \, : \, \psi \in B_{\bar{\theta}} \right\}. 
    \end{equation} 

    Similarly, for Equation (\ref{proof_propos_grad_non_bias_p}), we have 
    \begin{align} 
        &\sup_{\psi \in B_{\bar{\theta}}} \int \left| 
        \frac{1}{\|\psi - \bar{\theta}\|} \cdot  \left\{ \frac{1}{1-\alpha} 
        e^{(\alpha-1) \cdot T_{\psi}(\mathbf{x}^p)} - \frac{1}{1-\alpha} e^{(\alpha-1) \cdot T_{\bar{\theta}}(\mathbf{x}^p)}  \right\} \right| 
        dP(\mathbf{x}^p) \nonumber\\ 
        &\leq \sup_{\psi \in B_{\bar{\theta}}} \Big\{  L  \cdot e^{(1-\alpha)  L \cdot \| \psi - \bar{\theta} \|} \, \cdot  E_P \left[e^{(1-\alpha) \cdot T_{\bar{\theta}}} \right]  \Big\} \nonumber\\ 
        &=  E_P \left[e^{(1-\alpha) \cdot T_{\bar{\theta}}} \right]  \cdot \sup_{\psi \in B_{\bar{\theta}}} L \cdot e^{(1-\alpha)  L \cdot \| \psi - \bar{\theta} \|} < \infty. 
        \label{proof_propos_grad_non_bias_uniformaly_int_p} 
    \end{align} 

    Therefore, the following set is uniformly integrable for $P$: 
    \begin{equation} 
        \left\{ 
        \frac{1}{\|\psi - \bar{\theta}\|}\cdot \left\{ 
        \frac{1}{1-\alpha} \cdot e^{(\alpha-1) \cdot T_{\psi}(\mathbf{x}^p)} - \frac{1}{1-\alpha} \cdot e^{(\alpha-1) \cdot T_{\bar{\theta}}(\mathbf{x}^p)} 
        \right\} 
        \, : \, \psi \in B_{\bar{\theta}} \right\}. 
        \label{Eq_Appendixtheorem_gradient_unbiased_1} 
    \end{equation} 
    Thus, the Lebesgue integral and $\lim_{\psi \rightarrow \bar{\theta}}$ are exchangeable for the set in Equation (\ref{Eq_Appendixtheorem_gradient_unbiased_1}). Then, we have 
    \begin{align} 
        &\nabla_{\theta} E_Q \left[ 
        \frac{1}{\alpha} \cdot  e^{\alpha \cdot T_{\theta}(\mathbf{x}^q)} 
        \right] \Bigg|_{\theta = \bar{\theta}} \nonumber\\ 
        &= \lim_{\psi \rightarrow \bar{\theta}} \int  \frac{1}{\|\psi - \bar{\theta}\|}\cdot 
        \bigg\{ 
        \frac{1}{\alpha} \cdot e^{\alpha \cdot T_{\psi}(\mathbf{x}^q)} - \frac{1}{\alpha}\cdot e^{\alpha \cdot T_{\bar{\theta}}(\mathbf{x}^q)} 
        \bigg\}  dQ(\mathbf{x}^q) \nonumber\\ 
        &=   \int \lim_{\psi \rightarrow \bar{\theta}} \left[ 
        \frac{1}{\|\psi - \bar{\theta}\|} \cdot 
        \bigg\{ 
        \frac{1}{\alpha} \cdot e^{\alpha \cdot T_{\psi}(\mathbf{x}^q)} 
         - \frac{1}{\alpha} \cdot e^{\alpha \cdot T_{\bar{\theta}}(\mathbf{x}^q)} 
        \bigg\} \right] 
        dQ(\mathbf{x}^q)  \nonumber\\ 
        &=  E_Q \left[ 
        \nabla_{\theta} 
        \left( 
        \frac{1}{\alpha} \cdot e^{\alpha \cdot T_{\theta}(\mathbf{x}^q)} 
        \right) 
        \Bigg|_{\theta = \bar{\theta}} 
        \right].  \label{proof_propos_grad_non_bias_excahgeable_lim_int_Q} 
    \end{align}

    Similarly, we obtain 
    \begin{align} 
        &\nabla_{\theta} E_P \left[ 
        \frac{1}{1- \alpha} \cdot e^{(\alpha-1) \cdot T_{\theta}(\mathbf{x}^p)} 
        \right] \Bigg|_{\theta = \bar{\theta}} \nonumber\\ 
        &= \lim_{\psi \rightarrow \bar{\theta}} \int  \frac{1}{\|\psi - \bar{\theta}\|} 
        \cdot 
        \bigg\{ 
        \frac{1}{1- \alpha} \cdot e^{(\alpha-1) \cdot T_{\psi}(\mathbf{x}^p)} - \frac{1}{1- \alpha}\cdot e^{(\alpha-1) \cdot T_{\bar{\theta}}(\mathbf{x}^p)} 
        \bigg\}  dP(\mathbf{x}^p) \nonumber\\ 
        &=   \int \lim_{\psi \rightarrow \bar{\theta}} \left[ 
        \frac{1}{\|\psi - \bar{\theta}\|} \cdot 
        \bigg\{ 
        \frac{1}{1- \alpha} \cdot e^{(\alpha-1) \cdot T_{\psi}(\mathbf{x}^p)} - \frac{1}{1- \alpha} \cdot e^{(\alpha-1) \cdot T_{\bar{\theta}}(\mathbf{x}^p)} 
        \bigg\} \right] 
        dP(\mathbf{x}^p) \nonumber\\ 
        &= E_P \left[ \nabla_{\theta} 
        \left( 
        \frac{1}{1- \alpha}\cdot  e^{(\alpha-1) \cdot T_{\theta}(\mathbf{x}^p)} 
        \right) 
        \Bigg|_{\theta = \bar{\theta}} 
        \right].  \label{proof_propos_grad_non_bias_excahgeable_lim_int_P} 
    \end{align} 

    From Equations (\ref{proof_propos_grad_non_bias_excahgeable_lim_int_Q}) and (\ref{proof_propos_grad_non_bias_excahgeable_lim_int_P}), we have 
    \begin{align} 
        &E\left[ \nabla_{\theta} \, \mathcal{L}_{\alpha\text{-Div}}^{(R,S)}(T_{\theta}\,;\, \alpha)\Big|_{\theta = \bar{\theta}} \right]\nonumber\\ 
        &= E_P \left[ E_Q \left[ \nabla_{\theta} \, \mathcal{L}_{\alpha\text{-Div}}^{(R,S)}(T_{\theta}\,;\, \alpha)\Big|_{\theta = \bar{\theta}} \right] \right] \nonumber\\ 
        &= E_P \left[ E_Q \left[ \nabla_{\theta}|_{\theta = \bar{\theta}} 
        \left\{ 
        \frac{1}{\alpha} \cdot  \frac{1}{S}  \cdot \sum_{i=1}^{S} e^{\alpha\cdot T_{\theta}(\mathbf{x}_{i}^{q})} + \frac{1}{1- \alpha} \cdot  \frac{1}{R} \cdot \sum_{i=1}^{R} e^{(\alpha -1 )\cdot T_{\theta}(\mathbf{x}_{i}^{p})} 
        \right\} \right] \right] \nonumber\\ 
        &= E_P \left[ E_Q \left[ 
        \frac{1}{\alpha} \cdot  \frac{1}{S}  \cdot \sum_{i=1}^{S} \nabla_{\theta} \left( e^{\alpha\cdot T_{\theta}(\mathbf{x}_{i}^{p})}\right)\Big|_{\theta = \bar{\theta}} 
        \right. \right. \nonumber\\ 
        &\left. \left. \qquad \qquad \qquad + \ 
        \frac{1}{1- \alpha} \cdot  \frac{1}{R} \cdot \sum_{i=1}^{R} \nabla_{\theta} \left(e^{(\alpha -1 )\cdot T_{\theta}(\mathbf{x}_{i}^{p})}  \right) \Big|_{\theta = \bar{\theta}}\, 
        \right] \right] \nonumber\\ 
        &= \frac{1}{\alpha} \cdot  \frac{1}{S}  \cdot \sum_{i=1}^{S}  E_Q \left[\nabla_{\theta} 
        \left(e^{\alpha\cdot T_{\theta}(\mathbf{x}_{i}^q)} 
        \right) \right]\Big|_{\theta = \bar{\theta}} \nonumber\\ 
        &\qquad \qquad \qquad + \ 
        \frac{1}{1- \alpha} \cdot  \frac{1}{R} \cdot \sum_{i=1}^{R}  E_P \left[\nabla_{\theta}\left( 
        e^{(\alpha -1 )\cdot T_{\theta}(\mathbf{x}_{i}^p)} 
        \right) \Big|_{\theta = \bar{\theta}}\right] \nonumber\\ 
        &= \frac{1}{\alpha} \cdot  \frac{1}{S}  \cdot \sum_{i=1}^{S} \nabla_{\theta} E_Q \left[ e^{\alpha\cdot T_{\theta}(\mathbf{x}_{i}^q)} \right] \Big|_{\theta = \bar{\theta}} \nonumber\\ 
        &\qquad \qquad \qquad+ \ 
        \frac{1}{1- \alpha} \cdot  \frac{1}{R} \cdot \sum_{i=1}^{R} \nabla_{\theta} E_P \left[ e^{(\alpha -1 )\cdot T_{\theta}(\mathbf{x}_{i}^p)}\right] \Big|_{\theta = \bar{\theta}} \nonumber\\ 
        &= \nabla_{\theta}\, E_Q\left[ 
        \frac{1}{\alpha} \cdot  \frac{1}{S}  \cdot \sum_{i=1}^{S}   e^{\alpha\cdot T_{\theta}(\mathbf{x}_{i}^q)} \right] \Big|_{\theta = \bar{\theta}} \nonumber\\ 
        &\qquad \qquad  \qquad+ \ 
        \nabla_{\theta}\, E_P\left[ 
        \frac{1}{1- \alpha} \cdot  \frac{1}{R} \cdot \sum_{i=1}^{R} 
        e^{(\alpha -1 )\cdot T_{\theta}(\mathbf{x}_{i}^p)}\right] \Big|_{\theta = \bar{\theta}} \nonumber\\ 
        &= \nabla_{\theta} \,\left\{ 
        E_P \left[ E_Q \left[ 
        \frac{1}{\alpha} \cdot  \frac{1}{S}  \cdot \sum_{i=1}^{S} e^{\alpha\cdot T_{\theta}(\mathbf{x}_{i}^{q})} 
        \right. \right. \right. \nonumber\\ 
        &\left. \left. \left. \qquad \qquad + \ 
        \frac{1}{1- \alpha} \cdot  \frac{1}{R} \cdot \sum_{i=1}^{R} e^{(\alpha -1 )\cdot T_{\theta}(\mathbf{x}_{i}^{p})} 
        \right] \right] \right\} \Big|_{\theta = \bar{\theta}} \nonumber\\ 
        &= \nabla_{\theta} \,E_P \left[ E_Q \left[ \mathcal{L}_{\alpha\text{-Div}}^{(R,S)}(T_{\theta}\,;\, \alpha) \right]  \right] \Big|_{\theta = \bar{\theta}} \nonumber\\ 
        &= \nabla_{\theta} \, E\left[ \mathcal{L}_{\alpha\text{-Div}}^{(R,S)}(T_{\theta}\,;\, \alpha) \right] \Big|_{\theta = \bar{\theta}}. 
    \end{align} 

    This completes the proof. 
\end{proof} 


\begin{theorem}\label{Appendix_theorem_consistency_alpha_div_est} 
    Assume $E_P\big[(dQ/dP(\mathbf{X}))^{2\cdot \alpha}\big] < \infty$. 
    Let $T^*=- \log dQ/dP$. Subsequently, let 
    \begin{equation} 
        \hat{D}^{(N)} (Q||P\,;\, \alpha) = \frac{1}{\alpha\cdot(1-\alpha)} - \mathcal{L}_{\alpha\text{-Div}}^{(N,N)}(T^*\,;\, \alpha). 
    \end{equation} 
    Then, it holds that as $N \rightarrow \infty$, 
    \begin{equation} 
        \sqrt{N} \left\{ \hat{D}^{(N)} (Q||P\,;\, \alpha) - D(Q||P\, ;\, \alpha) \right\} \quad \xrightarrow{\ \  d \ \ }  \quad \mathcal{N}  \big(0, \ \sigma_{\alpha}^2 \big), 
    \end{equation} 
    where 
    \begin{align} 
        \sigma_{\alpha}^2&= C^1_{\alpha} \cdot D(Q||P\,;\, 2 \alpha) + C^2_{\alpha} \cdot  D(Q||P\,;\, 2 \alpha - 1) \nonumber\\ 
        & \qquad \qquad + \  C^3_{\alpha} \cdot  D(Q||P\, ;\, \alpha)^2 +  C^4_{\alpha} \cdot  D(Q||P\, ;\, \alpha) + C^5_{\alpha}, \label{eq_variance_alpha_not_half} 
    \end{align} 
    and 
    \begin{align} 
        C^1_{\alpha} &=  \frac{ 2\alpha \cdot (1 - 2\alpha)}{\alpha^2}, \\ 
        C^2_{\alpha} &=  \frac{ 2\alpha \cdot (1 - 2\alpha)}{(1-\alpha)^2}, \\ 
        C^3_{\alpha} &=  - \frac{1}{\alpha^2} - \frac{1}{(1 - \alpha)^2}, \\ 
        C^4_{\alpha} &=  \frac{2}{\alpha^2} + \frac{2}{(1 - \alpha)^2}, \nonumber \\ 
        C^5_{\alpha} &=  \left(\frac{1}{\alpha^2} + \frac{1}{(1 - \alpha)^2}\right) \cdot (2 - 2 \alpha \cdot (1-\alpha)). 
    \end{align} 
\end{theorem} 
\begin{proof}[proof of Theorem \ref{Appendix_theorem_consistency_alpha_div_est}] 
    First, note that 
    \begin{align} 
        \hat{D}^{(N)} (Q||P\,;\, \alpha) &= \frac{1}{\alpha\cdot(1-\alpha)} 	- \frac{1}{\alpha} \cdot \left\{ \frac{1}{N}\cdot \sum_{i=1}^N e^{\alpha \cdot T^*(\mathbf{X}^{i}_{Q} )}  \right\} 
        -  \frac{1}{1- \alpha} \cdot \left\{  \frac{1}{N} \cdot 
        \sum_{i=1}^N  e^{(\alpha - 1) \cdot T^*(\mathbf{X}^{i}_{P})} \right\} \nonumber\\ 
        &= \frac{1}{\alpha\cdot(1-\alpha)} 
        - \frac{1}{\alpha} \cdot \left\{ \frac{1}{N} \cdot \sum_{i=1}^N \left( \frac{dQ}{dP} (\mathbf{X}^{i}_{Q})  \right)^{-\alpha} \right\}\nonumber\\ 
        & \qquad \qquad \qquad\qquad\qquad- \  \frac{1}{1- \alpha} \cdot \left\{ 
         \frac{1}{N} \cdot \sum_{i=1}^N 
        \left( \frac{dQ}{dP}(\mathbf{X}^{i}_{P}) \right)^{1-\alpha}  \right\}. \label{D_alpha_hat_represented_as_sum_iids} 
    \end{align} 

    On the other hand, from Lemma \ref{lemma_loss_not_biased_lemma}, we obtain 
    \begin{align} 
        D(Q||P\, ;\, \alpha) &=  \sup_{T:\Omega \rightarrow \mathbb{R}} \left\{ 
        \frac{1}{\alpha\cdot(1-\alpha)} - \frac{1}{\alpha} \cdot E_{Q} \left[e^{\alpha \cdot T} \right] 
        - \frac{1}{1- \alpha} \cdot E_{P} \cdot \left[ e^{(\alpha - 1) \cdot T} \right] 	\right\} \nonumber\\ 
        &=  \frac{1}{\alpha\cdot(1-\alpha)} - 
        \inf_{T:\Omega \rightarrow \mathbb{R}} \left\{ \frac{1}{\alpha} \cdot E_{Q} \left[e^{\alpha \cdot T} \right] 
        - \frac{1}{1- \alpha} \cdot E_{P} \left[ e^{(\alpha - 1) \cdot T} \right] \right\} \nonumber\\ 
        &= \frac{1}{\alpha\cdot(1-\alpha)}  -  \widebar{\mathcal{L}}_{\alpha\text{-Div}}^* (\alpha) \nonumber\\ 
        &= \frac{1}{\alpha\cdot(1-\alpha)} -  \frac{1}{\alpha}\cdot  E_{Q} \left[e^{\alpha \cdot T^*} \right] 
        - \frac{1}{1- \alpha} \cdot  E_{P} \left[ e^{(\alpha - 1) \cdot T^*} \right]  \nonumber\\ 
        &= 
        \frac{1}{\alpha\cdot(1-\alpha)} 
        - \frac{1}{\alpha} \cdot E_Q\left[ \left( \frac{dQ}{dP} (\mathbf{x})  \right)^{-\alpha} \right] 
        -  \frac{1}{1- \alpha} \cdot E_P\left[ \left( \frac{dQ}{dP}(\mathbf{x}) \right)^{1-\alpha} \right]  \nonumber\\ 
        &=  \frac{1}{\alpha\cdot(1-\alpha)} 
        - \frac{1}{\alpha}  \cdot \left\{ \frac{1}{N}\cdot  \sum_{i=1}^N E_Q \left[ \left( \frac{dQ}{dP} (\mathbf{x}_{i} )  \right)^{-\alpha}  \right] \right\} \nonumber\\ 
        & \qquad \qquad \qquad \qquad\qquad - \  \frac{1}{1- \alpha}\cdot \left\{ \frac{1}{N}\cdot \sum_{i=1}^N E_P \left[ \left( \frac{dQ}{dP}(\mathbf{x}_{i} ) \right)^{1-\alpha} \right] \right\}. \nonumber\\ 
        \label{D_alpha_represented_as_exp} 
    \end{align}

    Subtracting Equation (\ref{D_alpha_represented_as_exp}) from Equation (\ref{D_alpha_hat_represented_as_sum_iids}), we have 
    \begin{align} 
        &\hat{D}^{(N)} (Q||P\,;\, \alpha) -  D(Q||P\, ;\, \alpha)\nonumber\\ 
        &= \frac{1}{N}\cdot \sum_{i=1}^N 
        \frac{1}{\alpha}  \cdot  \left\{  \left( \frac{dQ}{dP} (\mathbf{X}^{i}_{Q}) \right)^{- \alpha} - E_Q \left[ \left( \frac{dQ}{dP} (\mathbf{x}_i)  \right)^{-\alpha}  \right] \right\} 
        \nonumber\\ 
        & \qquad +  \   \frac{1}{N} \cdot \sum_{i=1}^N 
        \frac{1}{1- \alpha} \cdot \left\{ 
        \left( \frac{dQ}{dP}(\mathbf{X}^{i}_{P}) \right)^{1-\alpha}  -  E_P \left[ \left( \frac{dQ}{dP}(\mathbf{x}_i) \right)^{1-\alpha} \right] \right\} \nonumber\\ 
        &= \frac{1}{N}\cdot \sum_{i=1}^N 
        \frac{1}{\alpha}  \cdot  \left\{  \left( \frac{dQ}{dP} (\mathbf{X}^{i}_{Q}) \right)^{- \alpha} - E_Q \left[ \left( \frac{dQ}{dP} (\mathbf{x})  \right)^{-\alpha}  \right] \right\} 
        \nonumber\\ 
        & \qquad +  \   \frac{1}{N} \cdot \sum_{i=1}^N 
        \frac{1}{1- \alpha} \cdot \left\{ 
        \left( \frac{dQ}{dP}(\mathbf{X}^{i}_{P}) \right)^{1-\alpha}  -  E_P \left[ \left( \frac{dQ}{dP}(\mathbf{x}) \right)^{1-\alpha} \right] \right\}. 
    \end{align} 

    Let $L_{Q}^i = \frac{1}{\alpha} \cdot \left\{ \left( \frac{dQ}{dP} (\mathbf{X}^{i}_{Q}) \right)^{-\alpha} 
    - E_Q \left[ \left( \frac{dQ}{dP} (\mathbf{x})  \right)^{-\alpha} \right]\right\} $. 
    Then $\{L_{Q}^i\}_{i=1}^N$ are independent and identically distributed variables whose means and variances are as follows: 
    \begin{align} 
        E_Q \Big[L_{Q}^i \Big] &= 0, 
    \end{align} 
    and 
    \begin{align} 
        &\mathrm{Var}_Q \Big[ L_{Q}^i \Big] \nonumber\\ 
        &= E_Q \left[ 
        \frac{1}{\alpha^2} \cdot 
        \left\{ 
        \ \left( \frac{dQ}{dP} (\mathbf{x}) \right)^{-\alpha} - 
        E_Q \left[ \left( \frac{dQ}{dP} (\mathbf{x})  \right)^{-\alpha} \right] 
        \right\}^2 
        \right] \nonumber\\ 
        &= \frac{1}{\alpha^2} \cdot E_{Q} \left[  \left\{ 
        \left( \frac{dQ}{dP}  (\mathbf{x} )\right)^{-\alpha} 
        \right\}^2 \ 
        \right] 
        - 
        \frac{1}{\alpha^2} \cdot 
        \left\{ 
        E_{Q} 
        \left[ \left( \frac{dQ}{dP}  (\mathbf{x})\right)^{-\alpha} \ \right] 
        \right\}^2 \nonumber \\ 
        &= \frac{1}{\alpha^2} \cdot E_{P} \left[ 
        \frac{dQ}{dP}(\mathbf{x}) 
        \cdot 
        \left( \frac{dQ}{dP}  (\mathbf{x} )\right)^{-2 \alpha} 
        \ \right] 
        - 
        \frac{1}{\alpha^2} \cdot 
        \left\{ 
        E_{P} \left[ 
        \frac{dQ}{dP} (\mathbf{x}) \cdot \left( \frac{dQ}{dP} (\mathbf{x})  \right)^{-\alpha}\ \right] 
        \right\}^2 
        \nonumber \\ 
        &= \frac{1}{\alpha^2} 
        \cdot E_{P} \left[  \left(\frac{dQ}{dP}  (\mathbf{x}) \right)^{1 - 2\alpha} \right] - \frac{1}{\alpha^2} 
        \cdot \left\{E_{P} \left[\left(\frac{dQ}{dP} (\mathbf{x}) \right)^{1 - \alpha} \ \right] \right\}^2 \nonumber \\ 
        &= \frac{1}{\alpha^2} 
        \cdot   \Bigg\{  \  2\alpha \cdot (2\alpha - 1) \cdot \left( \frac{1}{ 2\alpha \cdot (2\alpha - 1)}\cdot 
        E_{P} \left[ \left(\frac{dQ}{dP}(\mathbf{x}) \right)^{1- 2\alpha}  - 1 \right] \right)  \  + 1 \nonumber\\ 
        & \qquad \qquad - \  \alpha^2 \cdot (1 - \alpha)^2 \cdot \left( 
        \frac{1}{\alpha\cdot (\alpha - 1)}\cdot 
        E_{P} \left[ \left(\frac{dQ}{dP}(\mathbf{x}) \right)^{1 - \alpha}  - 1 \  \right] 
        \right)^2 + 1\nonumber\\ 
        & \qquad \qquad + \  \alpha^2 \cdot (1 - \alpha)^2 \cdot \left( 
        \frac{2}{\alpha\cdot (\alpha - 1)}\cdot 
        E_{P} \left[ \left(\frac{dQ}{dP} (\mathbf{x})\right)^{1 - \alpha}  - 1 \  \right] \right) \nonumber\\ 
        & \qquad \qquad  \qquad  - \  2 \alpha \cdot (1-\alpha) \  \Bigg\}.     \label{Eq_var_LQ} 
    \end{align}

      Similarly, let $L_{P}^i = \frac{1}{1 - \alpha} \cdot \left\{ \left( \frac{dQ}{dP} (\mathbf{X}^{i}_{P}) \right)^{1-\alpha} - E_P \left[ \left( \frac{dQ}{dP} (\mathbf{x})  \right)^{1-\alpha} \right]\right\} $. 
      Then $\{L_{P}^i\}_{i=1}^N$ are independent and identically distributed variables whose means and variances are as follows: 
      \begin{align} 
          E_P \Big[L_{P}^i \Big] &= 0, 
      \end{align} 
      and 
      \begin{align} 
          &\mathrm{Var}_P \Big[ L_{P}^i \Big]\nonumber\\ 
          &= E_P \left[ 
          \frac{1}{(1 - \alpha)^2} \cdot 
          \left\{ 
          \ \left( \frac{dQ}{dP} (\mathbf{x}) \right)^{1- \alpha} - 
          E_P \left[ \left( \frac{dQ}{dP} (\mathbf{x})  \right)^{1 - \alpha} \right] 
          \right\}^2 
          \right] \nonumber\\ 
          &= \frac{1}{(1- \alpha)^2} \cdot E_{P} \left[  \left\{ 
          \left( \frac{dQ}{dP} (\mathbf{x}) \right)^{1 - \alpha} 
          \right\}^2 \ 
          \right] 
          - 
          \frac{1}{(1 -\alpha)^2} \cdot 
          \left\{ 
          E_{P} 
          \left[ \left( \frac{dQ}{dP} (\mathbf{x})  \right)^{1-\alpha} \ \right] 
          \right\}^2 \nonumber \\ 
          &= \frac{1}{(1- \alpha)^2} \cdot E_{P} \left[ 
          \left( \frac{dQ}{dP}  (\mathbf{x} ) \right)^{2(1 - \alpha)} 
          \ \right] 
          - 
          \frac{1}{(1 - \alpha)^2} \cdot 
          \left\{ 
          E_{P} \left[ \left( \frac{dQ}{dP}  (\mathbf{x}) \right)^{1 -\alpha} \ \right] 
          \right\}^2 
          \nonumber \\ 
          &= \frac{1}{(1- \alpha)^2} 
          \cdot E_{P} \left[  \left(\frac{dQ}{dP} (\mathbf{x}) \right)^{1 - (2\alpha - 1)}  \right] - \frac{1}{(1 - \alpha)^2} 
          \cdot \left\{E_{P} \left[\left(\frac{dQ}{dP}(\mathbf{x})  \right)^{1 - \alpha} \ \right] \right\}^2 \nonumber \\ 
          &= \frac{1}{(1 - \alpha)^2} 
          \cdot   \Bigg\{  \  2\alpha \cdot (2\alpha - 1) \cdot \left( \frac{1}{ 2\alpha \cdot (2\alpha - 1)}\cdot 
          E_{P} \left[ \left(\frac{dQ}{dP}(\mathbf{x}) \right)^{1- (2\alpha- 1)}  - 1 \right] \right)  \  + 1 \nonumber\\ 
          & \qquad \qquad - \  \alpha^2 \cdot (1 - \alpha)^2 \cdot \left( 
          \frac{1}{\alpha\cdot (\alpha - 1)}\cdot 
          E_{P} \left[ \left(\frac{dQ}{dP}(\mathbf{x}) \right)^{1 - \alpha}  - 1 \  \right] 
          \right)^2 + 1\nonumber\\ 
          & \qquad \qquad + \ \alpha^2 \cdot (1 - \alpha)^2 \cdot \left( 
          \frac{2}{\alpha\cdot (\alpha - 1)}\cdot 
          E_{P} \left[ \left(\frac{dQ}{dP}(\mathbf{x}) \right)^{1 - \alpha}  - 1 \  \right] \right) \nonumber\\ 
          &  \qquad \qquad  \qquad - \  2 \alpha \cdot (1-\alpha) \  \Bigg\}.     \label{Eq_var_LP} 
      \end{align} 

    Now, we consider an asymptotical distribution of the following term: 
    \begin{align} 
        & \sqrt{N} \cdot \left\{ \hat{D}^{(N)} (Q||P\,;\, \alpha) -  D(Q||P\, ;\, \alpha)  \right\}  \nonumber\\ 
        &=  \sqrt{N} \cdot \left(   \frac{1}{N} \cdot \sum_{i=1}^{N} \Big\{ L_{Q}^i - E_Q \left[ L_{Q}^i \right]\Big\} \right) 
        +  \sqrt{N} \cdot \left(   \frac{1}{N} \cdot \sum_{i=1}^{N} \Big\{  L_{P}^i -  E_P \left[ L_{P}^i \right]   \Big\}  \right). 
    \end{align} 

    By the central limit theorem, we observe that as $N \rightarrow \infty$, 
    \begin{equation} 
        \sqrt{N} \cdot  \left(   \frac{1}{N} \cdot  \sum_{i=1}^{N} \Big\{ L_{Q}^i - E_Q \left[ L_{Q}^i \right]\Big\} \right)   \quad  \xrightarrow{\ \  d \ \ }  \quad  \mathcal{N}\big(0, \mathrm{Var}_Q \left[ L_{Q}^i \right]\big), 
        \label{Eq_c_lim_1} 
    \end{equation} 
    and 
    \begin{equation} 
        \sqrt{N} \cdot  \left(   \frac{1}{N} \cdot  \sum_{i=1}^{N} \Big\{ L_{P}^i - E_P \left[ L_{P}^i \right]\Big\} \right) \quad  \xrightarrow{\ \  d \ \ }  \quad  \mathcal{N}\big(0, \mathrm{Var}_P \left[ L_{P}^i \right]\big). 
        \label{Eq_c_lim_2} 
    \end{equation} 

    Therefore, from Equations (\ref{Eq_c_lim_1}) and (\ref{Eq_c_lim_2}), we obtain 
    \begin{equation} 
        \sqrt{N} \cdot  \left\{ \hat{D}^{(N)} (Q||P\,;\, \alpha) -  D(Q||P\, ;\, \alpha)  \right\}   \xrightarrow{\ \  d \ \ } \mathcal{N}\big(0, \sigma_{\alpha}^2\big), 
    \end{equation} 
    and 
    \begin{align} 
        \sigma_{\alpha}^2 &= \mathrm{Var}_Q \left[L_Q^i  \right] + \mathrm{Var}_P \left[L_P^i  \right] \nonumber \\ 
        &= C^1_{\alpha} \cdot D(Q||P\,;\, 2\alpha) + C^2_{\alpha} \cdot  D(Q||P\,;\, 2 \alpha - 1) \nonumber\\ 
        & \qquad \quad + \  C^3_{\alpha} \cdot  D(Q||P\, ;\, \alpha)^2 +  C^4_{\alpha} \cdot  D(Q||P\, ;\, \alpha) + C^5_{\alpha}, \label{eq_variance_alpha_not_half2} 
    \end{align} 
    where 
    \begin{align} 
        C^1_{\alpha} &=  \frac{ 2\alpha \cdot (1 - 2\alpha)}{\alpha^2}, \\ 
        C^2_{\alpha} &=  \frac{ 2\alpha \cdot (1 - 2\alpha)}{(1-\alpha)^2}, \\ 
        C^3_{\alpha} &=  - \frac{1}{\alpha^2} - \frac{1}{(1 - \alpha)^2}, \\ 
        C^4_{\alpha} &=  \frac{2}{\alpha^2} + \frac{2}{(1 - \alpha)^2}, \nonumber \\ 
        C^5_{\alpha} &=   \left(\frac{1}{\alpha^2} + \frac{1}{(1 - \alpha)^2}\right) \cdot \big(2 - 2 \alpha \cdot (1-\alpha)\big). 
    \end{align} 

    This completes the proof. 
\end{proof}

\section{Details of the experiments in Section \ref{Section_NumericalExperiment}}\label{Section_Appendix_TheDetailsOfNumericalExperiments} 
In this section, we provide details of the hyperparameter settings used in the experiments described in Section \ref{Section_NumericalExperiment}. 


\subsection{Details of the experiments in Section \ref{subsection_Experimentsonthestabilityinoptimizationfordifferentvaluesofalpha}} \label{Apdx_Section_thedetailsExperimentsonthestabilityinoptimizationfordifferentvaluesofalpha} 
In this section, we provide details of the experiments reported in Section \ref{subsection_Experimentsonthestabilityinoptimizationfordifferentvaluesofalpha}. 

\subsubsection{Datasets.} 
We generated the following 100 train datasets. 
$P=\mathcal{N}(\mu_p, I_5)$ and $Q=\mathcal{N}(\mu_q, \Sigma_q)$ where 
$\mu_p=\mu_q=(0,0,\ldots,0)$, and $I_5$ denotes the $5$-dimensional identity matrix, and 
$\Sigma_q=(\sigma_{ij})_{i=1}^5$ with $\sigma_{ii}=1$, and $\sigma_{ij}=0.8$ for $i\neq j$. 
The size of each dataset was 5000. 

\subsubsection{Experimental Procedure.} 
Neural networks were trained using the synthetic datasets by optimizing $\alpha$-Div for  $\alpha=-3.0,\allowbreak -2.0, \allowbreak-1.0,\allowbreak 
0.2,\allowbreak 0.5,\allowbreak 0.8,\allowbreak 
2.0,\allowbreak 3.0$, and $4.0$ while measuring the training losses for each learning step. 
For each value of $\alpha$, 100 trials were conducted. 
Finally, we reported the median, ranging between the 45th and 55th quartiles, and between the 2.5th and 97.5th quartiles of the training losses at each learning step. 

\subsubsection{Neural Network Architecture, Optimization Algorithm, and Hyperparameters.} 
A 5-layer perceptron with ReLU activation was used, with each hidden layer comprising 100 nodes. For optimization, the learning rate was set to 0.001, the batch size to 2500, and the number of epochs to 250. 
The models for DRE were implemented using the PyTorch library \citep{paszke2017automatic} in Python. Training was conducted with the Adam optimizer \citep{kingma2014adam} in PyTorch and an NVIDIA T4 GPU.

\subsubsection{Results.} 
Figure \ref{Ap_Figure_subsection_Experimentsonthestabilityinoptimizationfordifferentvaluesofalpha_all} presents the training losses of $\alpha$-Div across learning steps for $\alpha=-3, -2, -1, 0.2, 0.5, 0.8, 2.0, 3.0$, and $4.0$. 
The upper ($\alpha=-3.0, -2.0$, and $-1.0$) and middle ($\alpha=2.0, 3.0$ and $4.0$) figures in Figure \ref{Ap_Figure_subsection_Experimentsonthestabilityinoptimizationfordifferentvaluesofalpha_all}show that the training losses diverged to large negative values when $\alpha < 0$ or $\alpha > 1$. 
In contrast, the bottom figure ($\alpha = 0.2, 0.5$, and $0.8$) Figure \ref{Figure_subsection_Experimentsonthestabilityinoptimizationfordifferentvaluesofalpha}, the training losses of $\alpha$-Div converged, 
 illustrating the stability of optimization with $\alpha$-Div when $0 < \alpha < 1$.

\begin{figure*}[t] 
    \begin{center} 
        \centerline{\includegraphics[width=1.00\columnwidth]{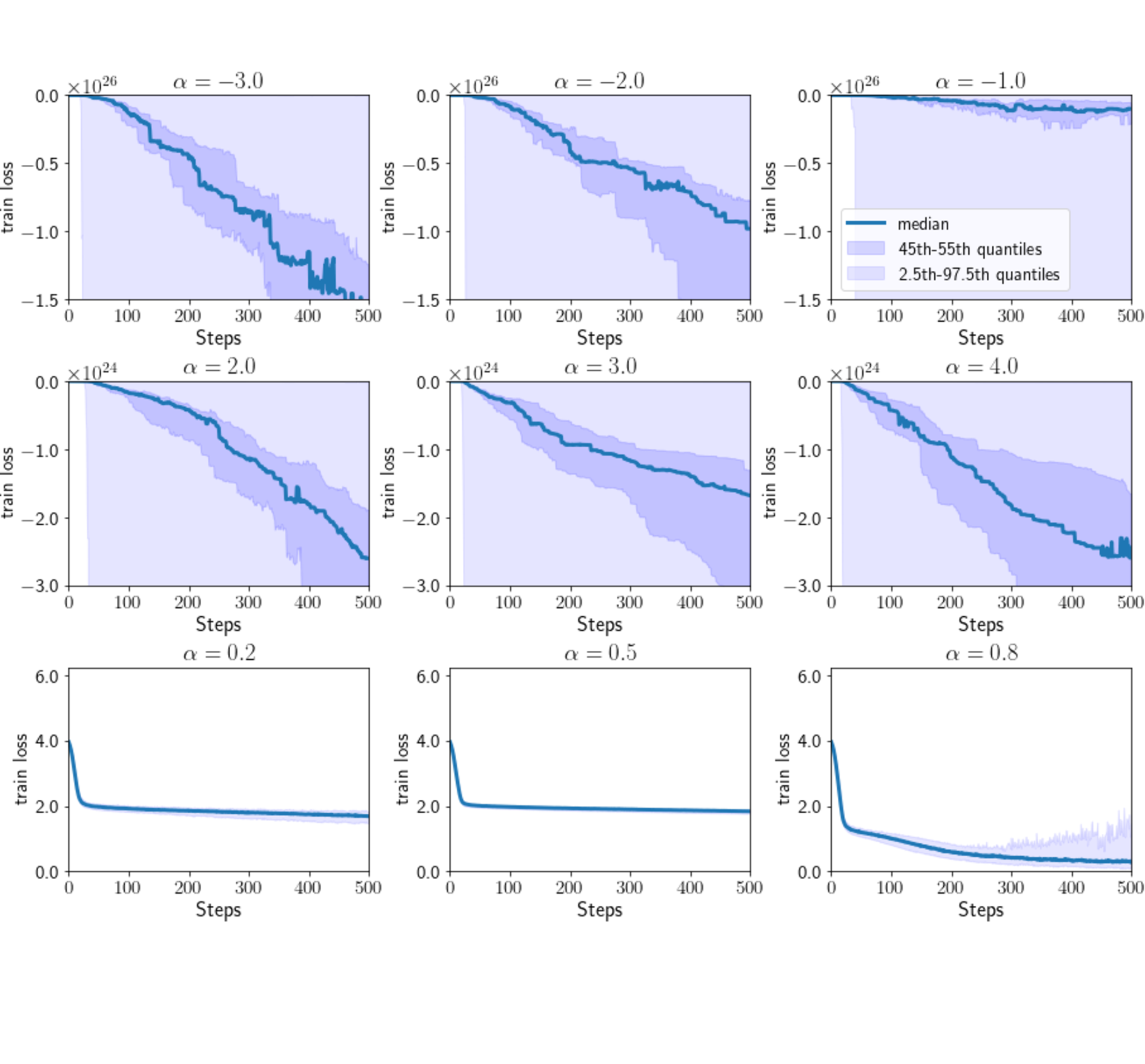}} 
        \caption{ 
            All results of Section \ref{subsection_Experimentsonthestabilityinoptimizationfordifferentvaluesofalpha}. 
            for $\alpha=-3, -2, -1, 0.2, 0.5, 0.8, 2.0, 3.0$, and $4.0$. 
      Each graph displays the training losses ($y$-axis) against the learning steps ($x$-axis) during optimization using $\alpha$-Div for different values of $\alpha$. 
      The solid blue line represents the median training losses. 
      The dark blue area indicates the range between the 45th and 55th percentiles, while the light blue area shows the range between the 2.5th and 97.5th percentiles of the training losses.} 
        \label{Ap_Figure_subsection_Experimentsonthestabilityinoptimizationfordifferentvaluesofalpha_all} 
    \end{center} 
\end{figure*}

\subsection{ 
    Details of the experiments in Section \ref{subsection_Experimentsonimprovementofoptimizationefficiencybyremovinggradientbias}} 
\label{Apdx_Subection_subsection_Experimentsonimprovementofoptimizationefficiencybyremoving} 
In this section, we provide details about the experiments reported in Section \ref{subsection_Experimentsonimprovementofoptimizationefficiencybyremovinggradientbias}. 

\subsubsection{Datasets.} 
We first generated 100 training datasets, each with a total size of $10000$ samples. 
Each dataset was drawn from two normal distributions: $P=\mathcal{N}(\mu_p, \cdot I_5)$ and $Q=\mathcal{N}(\mu_q, \cdot I_5)$ where $I_5$ denotes the $5$-dimensional identity matrix, and $\mu_p=(-5/2, 0, 0, 0, 0, 0)$ and $\mu_q=(5/2, 0, 0, 0, 0, 0)$. 

\subsubsection{Experimental Procedure.} 
We trained neural networks using the training datasets, optimizing both $\alpha$-Div and the standard $\alpha$-divergence loss function defined in Equation (\ref{Eq_loss_func_standard_alpha_div}) with $\alpha=0.5$, as well as nnBD-LSIF, while measuring the training losses at each learning step. 
We conducted 100 trials and reported the median training losses, along with the ranges between the 45th and 55th percentiles, and between the 2.5th and 97.5th percentiles, at each learning step. 

\paragraph{Loss functions used in the experiments.} 
We used $\alpha$-Div, the standard $\alpha$-divergence loss function, and the non-negative Bregman divergence least-squares importance fitting (nnBD-LSIF) loss function \citep{kato2021non} to train neural networks. 
The standard $\alpha$-divergence loss function, presented in Equation (\ref{Eq_loss_func_standard_alpha_div}), exhibits a biased gradient when $\alpha < 1$. 

nnBD-LSIF is an unbounded Bregman divergence loss function obtained from the deep direct DRE (D3RE) method proposed by \citet{kato2021non}, which is defined as 
\begin{equation} 
    \mathcal{L}_{\text{$\mathrm{nnBD}$-$\mathrm{LSIF}$}} (\phi) = 
    - \hat{E}_Q\left[\phi(\mathbf{x})- \frac{C}{2}\phi^2(\mathbf{x})\right] 
    + \left( \frac{1}{2} \cdot \hat{E}_P\left[ \phi^2(\mathbf{x}) \right]  -  \frac{C}{2} \cdot \hat{E}_Q\left[ \phi^2(\mathbf{x})\right]  \right)_{+}, 
\end{equation} 
where $(a)_{+}=a$ if $a > 0$ otherwise $(a)_{+}=0$, and $C$ is positive constant. 
Note that, nnBD-LSIF has an unbiased gradient. 
The optimization efficiency of nnBD-LSIF was observed to confirm the effectiveness of an unbiased gradient of an $f$-divergence loss function as well as $\alpha$-Div.

\subsubsection{Neural Network Architecture, Optimization Algorithm, and Hyperparameters.} 
We used a 4-layer perceptron with ReLU activation, in which each hidden layer contained 100 nodes. For optimization, we set the learning rate to 0.00005, the batch size to 2500, and the number of epochs to 1000. 
We implemented all models for DRE using the PyTorch library \citep{paszke2017automatic} in Python. Training was performed with the Adam optimizer \citep{kingma2014adam} in PyTorch, utilizing an NVIDIA T4 GPU. 

\subsection{Details of the experiments in Section \ref{subsection_ExperimentsontheestimationaccuracyusinghighKLdivergencedata}} 
\label{Apdx_subsection_ExperimentsontheestimationaccuracyusinghighKLdivergencedata} 
In this section, we provide details of the experiments reported in Section \ref{subsection_ExperimentsontheestimationaccuracyusinghighKLdivergencedata}. 

\subsubsection{Datasets.} 
Initially, we created 100 train and test datasets, each with a size of 10,000. 
Each dataset is generated from two normal distributions $P=\mathcal{N}(\mu_p, \sigma^2 \cdot I_3)$ and $Q=\mathcal{N}(\mu_q,  4^2 \cdot I_3)$ where $I_3$ denotes the $3$-dimensional identity matrix and $\sigma$ values were $1.0$, 1.1, 1.2, 1.4, 1.6, 2.0, 2.5, or 3.0, and $\mu_p=(-3/2, -3/2, -3/2)$ and $\mu_q=(3/2,3/2,3/2)$. 
In the aforementioned setting, the ground truth KL-divergence amounts of the datasets is obtained as 
\begin{align} 
    KL(P||Q) &= E_P \left[ \log \left(\frac{dP}{dQ}   \right)  \right] \nonumber\\ 
    &= \frac{1}{2} \cdot \left[ \log \frac{|\Sigma_p|}{|\Sigma_q|} -d + \mathrm{Tr }(\Sigma_p^{-1} \cdot \Sigma_q) + (\mu_p - \mu_q)^{T} \cdot \Sigma_p^{-1} \cdot  (\mu_p - \mu_q) \right] \nonumber\\ 
    &=  \frac{1}{2} \cdot  \left[ \log \frac{\sigma^2 \cdot|I_3|}{4^2\cdot|I_3|} - 3 + \mathrm{Tr}(\sigma^{-2} \cdot I_3\cdot 4^2\cdot I_3) + 3 \cdot \mathbf{1}^{T}  \cdot \sigma^{-2} \cdot I_3 \cdot   3 \cdot \mathbf{1} \right] \nonumber\\ 
     &=  \frac{1}{2} \cdot  \Big(6 \log \sigma - 12\log 2  - 3+  3 \cdot \sigma^{-2}\cdot 16 + 27 \cdot \sigma^{-2} \Big)  \nonumber\\ 
      &=  3 \log \sigma - 6 \log 2  - \frac{3}{2}+   \sigma^{-2}\cdot \frac{75}{2}. 
    \label{Eq_apedx_Apdx_subsection_ExperimentsontheestimationaccuracyusinghighKLdivergencedata} 
\end{align} 
From Equation (\ref{Eq_apedx_Apdx_subsection_ExperimentsontheestimationaccuracyusinghighKLdivergencedata}), we see that the ground truth KL-divergence amounts of the datasets were 31.8, 25.6, 21.0, 14.5, 10.4, 10.4, 5.8, 3.1, and 1.8, which correspond to the ascending $\sigma$ values, such that $\sigma=1.0, 1.1, \ldots, 3.0$. 

\subsubsection{Experimental Procedure.} 
We trained neural networks using the training datasets by optimizing both $\alpha$-Div with $\alpha=0.5$ and a KL-divergence loss function. 
Details of the KL-divergence loss function used in the experiments are provided in the following paragraph. 
Training was halted if the validation losses, measured using the validation datasets, did not improve during an entire epoch. 
After training the neural networks, we measured the root mean squared error (RMSE) of the estimated density ratios using the test datasets. 
We estimated the KL-divergence of the test datasets for each trial using the estimated density ratios and the plug-in estimation method, which is detailed below. 
A total of 100 trials were conducted. Finally, we reported the median RMSE of the DRE and the estimated KL-divergence, along with the interquartile range (25th to 75th percentiles), for each KL-divergence loss function and $\alpha$-Div. 

\paragraph{KL-divergence loss function.} 
A standard KL-divergence loss function is obtained as 
\begin{equation} 
    \mathcal{L}_{\mathrm{standard}\text{-}\mathrm{KL}} (\phi) = 
    \hat{E}_P\left[\phi\right] - \hat{E}_Q\left[\log \phi \right]. 
\end{equation} 
In our pre-experiment, the standard KL-divergence loss function exhibited poor optimization performance, which we attributed to its biased gradients. 
However, we found that applying the Gibbs density transformation, as described in Section \ref{DerivationofourlossfunctionforDRE}, improved optimization performance for the KL-divergence loss function. Therefore, we used the following KL-divergence loss function, $\mathcal{L}_{\mathrm{KL}}(\cdot)$ in our experiments: 
\begin{equation} 
    \mathcal{L}_{\mathrm{KL}}(T) = 
     \hat{E}_P\left[e^{T} \right] - \hat{E}_Q\left[T \right]. 
\end{equation} 

\paragraph{Plug-in KL-divergence estimation method using the estimated density ratios.} 
The KL-divergence of the test datasets was estimated by estimated predicted density ratios for the test datasets using  plug-in estimation, such that 
\begin{equation} 
    \widehat{KL}(P||Q) = \hat{E}_Q \left[\log \hat{r}_q(\mathbf{x}) \right], 
    \label{eq_apendix_KL_div_loss_in_exp} 
\end{equation} 
where $\hat{r}_q(\mathbf{x}) = e^{T(\mathbf{x})} / \hat{E}_Q[ e^{T(\mathbf{x})}]$. 

\subsubsection{Neural Network Architecture, Optimization Algorithm, and Hyperparameters.} 
The same neural network architecture, optimization algorithm, and hyperparameters were used for both $\alpha$-Div and the KL-divergence loss function. 
A 4-layer perceptron with ReLU activation was employed, with each hidden layer consisting of 256 nodes. For optimization with the $\alpha$-Div loss function, the value of $\alpha$ was set to 0.5, the learning rate to 0.00005, and the batch size to 256. Early stopping was applied with a patience of 32 epochs, and the maximum number of epochs was set to 5000. For optimization using the KL-divergence loss function, the learning rate was 0.00001, with a batch size of 256. Early stopping was applied with a patience of 2 epochs, and the maximum number of epochs was 5000. 
All models for both D3RE and $\alpha$-Div were implemented using PyTorch library \citep{paszke2017automatic} in Python. The neural networks were trained with the Adam optimizer \citep{kingma2014adam} in PyTorch on an NVIDIA T4 GPU.

\section{Additional Experiments}\label{Section_Appendix_Additionalexperiments} 

\subsection{Comparison with Existing DRE Methods}\label{Apdx_subsection_Additionalexperiments_ExperimentsonComparisontoExistingDREmethod} 
We empirically compared the proposed DRE method with existing DRE methods in terms of DRE task accuracy. 
This experiment followed the setup described in \citet{kato2021non}. 

\subsubsection{Existing \texorpdfstring{$f$}{f}-Divergence Loss Functions for Comparison.} 
The proposed method was compared with the Kullback--Leibler importance estimation procedure (KLIEP) \citep{sugiyama2007direct}, unconstrained least-squares importance fitting (uLSIF) \citep{kanamori2009least}, and deep direct DRE (D3RE) \citep{kato2021non}. 
The \texttt{densratio} library in R was used for KLIEP and uLSIF.
\footnote{The URL: \texttt{https://cran.r-project.org/web/packages/densratio/index.html}.} 
For D3RE, the non-negative Bregman divergence least-squares importance fitting (nnBD-LSIF) loss function was employed. 

\subsubsection{Datasets.} 
For each $d=10, 20, 30, 50$, and $100$, 100 datasets were generated, comprising training and test sets drawn from two $d$-dimensional normal distributions $P = \mathcal{N}(\mu_p, I_d)$ and $Q = \mathcal{N}(\mu_q, I_d)$, where $I_d$ denotes the $d$-dimensional identity matrix, $\mu_p = (0, 0, \dots, 0)$, and $\mu_q = (1, 0, \dots, 0)$. 

\subsubsection{Experimental Procedure.} 
Model parameters were trained using the training datasets, and density ratios for the test datasets were estimated. 
The mean squared error (MSE) of the estimated density ratios for the test datasets was calculated based on the true density ratios. 
Finally, the mean and standard deviation of the MSE for each method were reported. 

\subsubsection{Neural Network Architecture, Optimization Algorithm, and Hyperparameters.} 
For both D3RE and $\alpha$-Div, a 3-layer perceptron with 100 hidden units per layer was used, consistent with the neural network structure employed in \citet{kato2021non}. 
For D3RE, the learning rate was set to 0.00005, the batch size to 128, and the number of epochs to 250 for each data dimension. 
The hyperparameter $C$ was set to 2.0. For $\alpha$-Div, the learning rate was set to 0.0001, the batch size to 128, and the value of $\alpha$ to 0.5 for each data dimension. 
The number of epochs was set to 40 for data dimensions of 10, 50 for dimensions of 20, 30, and 50, and 60 for a dimension of 100. 
The PyTorch library \citep{paszke2017automatic} in Python was used to implement all models for both D3RE and $\alpha$-Div. 
The Adam optimizer \citep{kingma2014adam} in PyTorch, along with an NVIDIA T4 GPU, was used for training the neural networks.

\paragraph{Results.} 
\begin{table*}[t] 
    \caption{
    Results of additional experiments described in Section 
    \ref{Apdx_subsection_Additionalexperiments_ExperimentsonComparisontoExistingDREmethod}. 
    The table reports the mean and standard deviation of the MSE for DRE with each method. 
    Results are presented in the format ``mean (standard deviation)''. The lowest MSE values are highlighted in bold.} 
    \label{Table_for_Experiment_DRE} 
    \centering 
     \vspace{3.0mm} 
    \begin{tabular}{lccccc} 
        \toprule            %
        & 
        \multicolumn{5}{c}{Data dimensions ($d$)} \\ 
        \cmidrule(lr){2-6} 
        Model 	& $d=10$ & $d=20$	& $d=30$  & $d=50$ & $d=100$ \\ 
        \midrule 
        KLIEP & 2.141(0.392) & 2.072(0.660) & 2.005(0.569) & 1.887(0.450) & 1.797(0.419) \\ 
        uLSIF & 1.482(0.381) & 1.590(0.562) & 1.655(0.578) & 1.715(0.446) & 1.668(0.420) \\ 
        D3RE  & 1.111(0.314) & 1.127(0.413) & 1.219(0.458) & 1.222(0.305) & 1.369(0.355) \\ 
        $\alpha$-Div    &  \textbf{0.173(0.072)} &  \textbf{0.278(0.113)} &  \textbf{0.479(0.259)} & \textbf{0.665(0.194)} & \textbf{1.118(0.314)} \\ 
        \bottomrule 
    \end{tabular} 
\end{table*} 

Table \ref{Table_for_Experiment_DRE} summarizes the results for each method across different data dimensions. 
Six cases where the MSE for KLIEP exceeded 1000 were excluded. 
For all data dimensions, $\alpha$-Div consistently demonstrated superior accuracy compared to the other methods, achieving the lowest MSE values. However, it is important to note that the prediction accuracy of $\alpha$-Div significantly decreased as the data dimensions increased. 
The curse of dimensionality in DRE was also observed in experiments with real-world data, which will be reported in the next section.

\subsection{Experiments Using Real-World Data} 
\label{Apdx_subsection_AdditionalexperimentsExperimentsUsingRealWorldData} 
We conducted  numerical experiments with real-world data to highlight important considerations in applying the proposed method. Specifically, we conducted experiments on Domain Adaptation (DA) for classification models using the Importance Weighting (IW) method \citep{shimodaira2000improving}. 
The IW method builds a prediction model for a target domain using data from a source domain, while adjusting the distribution of source domain features to match the target domain features by employing the density ratio between the source and target domains as sample weights. 

In these experiments, we used the Amazon review data \citep{blitzer2007biographies} and employed two prediction algorithms: linear regression and gradient boosting. The hyperparameters for each algorithm were selected from a predefined set based on validation accuracy, using the Importance Weighted Cross Validation (IWCV) method \citep{sugiyama2007covariate} on the source domain data. 

Through these experiments, we observed a decline in prediction accuracy on test data from the target domain as the data dimensionality increased. 
Specifically, there were instances where the accuracy worsened compared to models that did not use importance weighting---i.e., models trained solely on the source data. 
These phenomena are likely due to two issues in DRE: the degradation in density ratio estimation accuracy as dimensionality increases, as noted in Section \ref{Apdx_subsection_Additionalexperiments_ExperimentsonComparisontoExistingDREmethod}, and the negative impact of high KL-divergence on density ratio estimation, as observed in Section \ref{subsection_ExperimentsontheestimationaccuracyusinghighKLdivergencedata}. 
It is important to note that the KL-divergence increases as the number of features increases (i.e., data dimensions), unless all features are fully independent.

\subsubsection{Datasets.} 
The Amazon review dataset \citep{blitzer2007biographies} includes text reviews and rating scores from four domains: books, DVDs, electronics, and kitchen appliances. The text reviews are one-hot encoded, and the rating scores are converted into binary labels. Twelve domain adaptation classification tasks were conducted, where each domain served once as the source domain and once as the target domain.

\paragraph{Notation.} 
$\mathbf{X}_{\text{S}}^{d}$ and $\mathbf{X}_{\text{T}}^{d}$ denote subsets of the original data for the source and target domains, respectively, for each feature dimension $d$, where the columns of $\mathbf{X}_{\text{S}}^{d}$ and $\mathbf{X}_{\text{T}}^{d}$ are identical. $y_{\text{S}}$ and $y_{\text{T}}$ represent the objective variables in the source and target domains, respectively, which are binary labels assigned to each sample in the source and target domain data. $\mathbf{Z}_{\text{S}}^d$ and $\mathbf{Z}_{\text{T}}^d$ denote $d$-dimensional feature tables used to estimate the density ratio $\hat{r}(\mathbf{Z}_{\text{S}}^{d})$, which is the ratio of the target domain density to the source domain density. $\text{dim}(X)$ indicates the number of columns (features) in the data $X$.

\subsubsection{Experimental Procedure.} 
\paragraph{Step 1. Creation of feature tables.} 
Many DA methods utilize feature embedding techniques to project high-dimensional data into a lower-dimensional feature space, facilitating the handling of distribution shifts between source and target domains \citep{ragab2023adatime}. However, our preliminary experiments revealed that model prediction accuracies were significantly influenced by the embedding procedures. To address these effects on DA task accuracies, we explored an embedding method with theoretical considerations detailed in the next section. 

Specifically, we selected an identical set of columns from the original data of both the source and target domains for each feature dimension, $d = 8, 16, 32, 64$, and $128$, arranging the columns in ascending order of $d$. Let $\mathbf{X}_{\text{S}}^{d}$ and $\mathbf{X}_{\text{T}}^{d}$ denote the subsets of the original data for the source and target domains, respectively, determined by these selected columns for each $d$. We then generated a $d \times \text{dim}(\mathbf{X}_{\text{S}}^d)$ matrix $A_d$ from a normal distribution. Finally, by multiplying $\mathbf{X}_{\text{S}}^{d}$ and $A_d$, and $\mathbf{X}_{\text{T}}^{d}$ and $A_d$, we obtained the feature tables $\mathbf{Z}_{\text{S}}^d$ and $\mathbf{Z}_{\text{T}}^d$, embedding the original source and target domain data into a $d$-dimensional feature space. 
\footnote{In our experiments, we utilized matrices generated from the normal distribution as 
    embedding maps, which is equivalent to random projection \citep{bingham2001random}. 
    However, the linearity of the map is not necessary for preserving the density ratios, as discussed in Section \ref{subsection_ConsiderationoftheFeatureEmbeddingMethod}. 
    In contrast, linearity is a key requirement for the distance-preserving property of random projection.} 

\paragraph{Step 2. Estimation of importance weights.} Using the proposed loss function with $\mathbf{Z}_{\text{S}}^{d}$ and $\mathbf{Z}_{\text{T}}^{d}$ obtained from the previous step, we estimated the probability density ratio $\hat{r}(\mathbf{Z}_{\text{S}}^{d})$ for each feature dimension $d$, where $r(\mathbf{Z}_{\text{S}}^d) = {q(\mathbf{Z}_{\text{T}}^d)}/{p(\mathbf{Z}_{\text{S}}^d)}$. This ratio represents the density of the target domain relative to the source domain.

\paragraph{Step 3. Model construction.} 
We constructed the target model using the IW method. Specifically, we built a classification model using the training dataset $(\mathbf{X}_{\text{S}}^d, y_{\text{S}})$, where the estimated density ratio $\hat{r}(\mathbf{Z}_{\text{S}}^d)$ served as the sample weights for the IW method. Additionally, we constructed a prediction model using only the source data, i.e., a model built without importance weighting.

\paragraph{Step 4. Verification of prediction accuracy} 
To evaluate the prediction accuracy of the models, we selected the ROC AUC score, as it measures the accuracy independent of the thresholds used for label determination. 
For the classification tasks in domain adaptation, we employed two classification methods, each representing a different algorithmic approach: \texttt{LogisticRegression} from the \texttt{scikit-learn} library \citep{pedregosa2011scikit} for linear classification, and \texttt{LightGBM} \citep{ke2017lightgbm} for nonlinear classification. 

The hyperparameter sets of both methods for evaluating the prediction accuracies on the target domains were selected using the IWCV method \citep{sugiyama2007covariate}. These hyperparameter sets were defined as all combinations of the values listed in Table \ref{Table_HyperparametervaluesforElasticLogisticRegression} for \texttt{LogisticRegression} and Table \ref{Table_HyperparametervaluesforElasticLightGBM} for \texttt{LightGBM}, respectively. 
Finally, the prediction accuracies on the target domain were assessed using the best model selected through IWCV, where the target domain data $(\mathbf{X}{\text{T}}^d, y{\text{T}})$ were used for the predictions.

\renewcommand{\arraystretch}{1.3} 
\begin{table}[ht] 
    \caption{Hyperparameter values for LogisticRegression. ``Hyperparameters'' shows the hyperparameter names used in the library. Texts inside parentheses provide explanations of the parameters.} 
    \label{Table_HyperparametervaluesforElasticLogisticRegression} 
    \centering 
        \vspace{3.0mm} 
    \begin{tabularx}{\textwidth}{p{6cm}>{\RaggedRight}X} 
        \toprule 
        Hyperparameters & Values \\ 
        \hline 
        l1\_ratio (Elastic-Net mixing parameter) & 0, 0.1, 0.2, 0.3, 0.4, 0.5, 0.6, 0.7, 0.8, 0.9, and 1.0 \\ 
        lambda (Inverse of regularization strength) & 0.0001, 0.001, 0.01, 0.05, 0.1, 0.25, 0.5, 0.75, 1, 1.5, 2, and 5 \\ 
        \hline 
    \end{tabularx} 
\end{table} 
\renewcommand{\arraystretch}{1.0} 
 \vspace{4.2mm} 

\renewcommand{\arraystretch}{1.3} 
\begin{table}[ht] 
    \caption{Hyperparameter values for LightGBM. ``Hyperparameters'' shows the hyperparameter names used in the library. Texts inside parentheses provide explanations of the parameters.} 
    \label{Table_HyperparametervaluesforElasticLightGBM} 
    \centering 
    \vspace{3.0mm} 
    \begin{tabularx}{\textwidth}{p{8cm}>{\RaggedRight}X} 
        \toprule 
        Hyperparameters & Values \\ 
        \hline 
        lambda\_l1 ($L_1$ regularization) & 0.0, 0.25, 0.5, 0.75, and 1.0 \\ 
        lambda\_l2 ($L_2$ regularization) & 0.0, 0.0001, 0.001, 0.01, 0.1, 0.5, 1.0, 2.0, and 4.0\\ 
        num\_leaves (Number of leaves in trees) & 64, 248, 1024, 2048, and 4096 \\ 
        learning\_rate (Learning rate) & 0.01, and 0.001 \\ 
        feature\_fraction (Ratio of featuresr used for modeling) & 0.4, 0.8, and 1.0 \\ 
        \hline 
    \end{tabularx} 
 \vspace{4.2mm} 
\end{table} 
\renewcommand{\arraystretch}{1.0} 

\renewcommand{\arraystretch}{1.3} 
\begin{table}[ht] 
    \caption{ 
        Original data dimensions ($\text{dim}(\mathbf{X})$) used to obtain feature dimensions ($\text{dim}(\mathbf{Z})$) by embedding.} 
    \label{table_Dimensionsoforiginaldataandcorrespondingfeaturedimensions} 
    \centering 
    \vspace{3.0mm} 
    \begin{tabular}{cccccc} 
        \hline 
        & \multicolumn{5}{c}{Feature dimensions ($d = \text{dim}(\mathbf{Z})$)} \\ 
        \cline{2-6} 
        & \( d = 8 \) & \( d = 16 \) & \( d = 32 \) & \( d = 64 \) & \( d = 128 \) \\ 
        \hline 
        Original data dimensions ($\text{dim}(\mathbf{X})$) &500 & 700 & 900 & 1700 & 4600 \\ 
        \hline 
    \end{tabular} 
\end{table} 
 \vspace{4.2mm} 
\renewcommand{\arraystretch}{1.0}

\subsubsection{Consideration of the Feature Embedding Method}\label{subsection_ConsiderationoftheFeatureEmbeddingMethod} 
Let $f: \mathbf{X} \longmapsto \mathbf{Z}$ denote a $C^1$-class embedding map 
which maps the original data $\mathbf{X} \subseteq \mathbb{R}^{N \times D}$ into the feature space $\mathbf{Z} \subseteq \mathbb{R}^{N \times d}$ with $d < D$. 

We now demonstrate that if $f$ is injective for both the source and target domain data, it preserves the density ratio between the target and source domain densities when it maps the original data into the embedded data. 

To demonstrate this, we use the singular value decomposition (SVD) of the Jacobian matrix $J_f(\mathbf{x})$ of $f$, which gives 

\[ 
J_f(\mathbf{x}) = U(\mathbf{x}) \cdot \Sigma(\mathbf{x}) \cdot V^T(\mathbf{x}), 
\] 

with 

\[ 
\Sigma(\mathbf{x}) = \begin{pmatrix} 
    \sigma_1(\mathbf{x}) & 0 & \dots & 0 \\ 
    0 & \sigma_2(\mathbf{x}) & \dots & 0 \\ 
    \vdots & \vdots & \ddots & \vdots \\ 
    0 & 0 & \dots & \sigma_{\text{dim}(\mathbf{z})}(\mathbf{x}) \\ 
    0 & 0 & \dots & 0 \\ 
    \vdots & \vdots & \dots & \vdots \\ 
    0 & 0 & \dots & 0 \\ 
\end{pmatrix}, 
\] 

where $U(\mathbf{x})$ and $V^T(\mathbf{x})$ are orthogonal matrices in $\mathbb{R}^{\text{dim}(\mathbf{x}) \times \text{dim}(\mathbf{x})}$ and $\mathbb{R}^{\text{dim}(\mathbf{z}) \times \text{dim}(\mathbf{z})}$, respectively, and $\sigma_i(\mathbf{x}) \neq 0$ for all $i$. 
This gives the following relationship between the probability densities of the original and embedded data:

\begin{equation} 
    p_{\mathbf{X}}(\mathbf{x}) = \left( \prod_{i=1}^{\text{dim}(\mathbf{z})} \sigma_i(\mathbf{x}) \right) p_{\mathbf{Z}}(f(\mathbf{x})). \label{Eq_seqtion_D_2_1} 
\end{equation} 

From Equation (\ref{Eq_seqtion_D_2_1}), the probability density ratio between the source and target domains of data embedded by $f$ is obtained as 

\[ 
\frac{q_{\mathbf{X}}(\mathbf{x})}{p_{\mathbf{X}}(\mathbf{x})} = \frac{\left( \prod_{i=1}^{\text{dim}(\mathbf{z})} \sigma_i(\mathbf{x}) \right) \cdot q_{\mathbf{Z}}(f(\mathbf{x}))}{\left( \prod_{i=1}^{\text{dim}(\mathbf{z})} \sigma_i(\mathbf{x}) \right) \cdot p_{\mathbf{Z}}(f(\mathbf{x}))} = \frac{q_{\mathbf{Z}}(\mathbf{z})}{p_{\mathbf{Z}}(\mathbf{z})}. 
\] 

Therefore, $f$ preserves the density ratio from the original data to the embedded data. 
 Additionally, if $f$ is a matrix multiplication, its injectivity can be achieved for $\mathbf{X}$ by reducing its dimensionality sufficiently. 
Reducing the dimensionality of $\mathbf{X}$ can ensure the injectivity of $f$.

We heuristically detected the injectivity of our embedding by observing the following: 
We identified the largest subset of columns in 
 $\mathbf{Z}_{\text{S}}^d$ such that a significant increase in the KL-divergence between $P(\mathbf{Z}_{\text{S}}^d)$ and $P(\mathbf{Z}_{\text{T}}^d)$ was observed when a column is added to a partial subset of columns within it. Injectivity was assumed for columns within this subset. 

Although our feature embedding procedure is based on heuristic observations and lacks rigorous theoretical analysis, we found it adequate for evaluating the performance of the proposed method in DRE downstream tasks with real-world data when the number of features increases. 

 The number of columns in the original data used in the experiments is listed in Table \ref{table_Dimensionsoforiginaldataandcorrespondingfeaturedimensions}.

\paragraph{Neural Network Architecture, Optimization Algorithm, and Hyperparameters.} 
A 5-layer perceptron with ReLU activation was used, with each hidden layer consisting of 256 nodes. For optimization, the value of $\alpha$ was set to 0.5, the learning rate to 0.0001, and the batch size to 128. Early stopping was applied with a patience of 1 epoch, and the maximum number of epochs was set to 5000. 
The PyTorch library \citep{paszke2017automatic} in Python served as the framework for model implementation. Training of the neural networks was carried out using the Adam optimizer \citep{kingma2014adam} on an NVIDIA T4 GPU.

\paragraph{Results.} The results are shown in Figure \ref{Apdx_subsection_FigLogisticRegression} (LogisticRegression) and Figure \ref{Apdx_subsection_FigLightGBM} (LightGBM). The domain names at the origin of the arrows in the figure titles represent the source domains, and those at the tip indicate the target domains. The $x$-axis of each figure shows the number of features, and the $y$-axis represents the ROC AUC for the domain adaptation tasks. 
The orange line (SO) represents models trained using source-only data, i.e., models trained using source data without importance weighting, while the blue line (IW) represents models trained using source data with importance weighting. 

Prediction accuracy for the models trained solely on the source data improved as the number of features increased, which is expected since more features typically lead to better accuracy. However, for both Logistic Regression and LightGBM, the performance of the IW method deteriorated as the number of features increased. A more significant decline in performance with increasing features was observed for most domain adaptation (DA) tasks, except for ``books $\rightarrow$ DVDs'' and ``kitchen $\rightarrow$ DVDs''. 
These results suggest that the estimated density ratios caused the data distribution to deviate more from the target domain as the number of features increased. Consequently, the accuracy of the density ratio estimation (DRE) likely worsened with more features.

\begin{figure*}[t] 
    \begin{center} \centerline{\includegraphics[width=1.00\columnwidth]{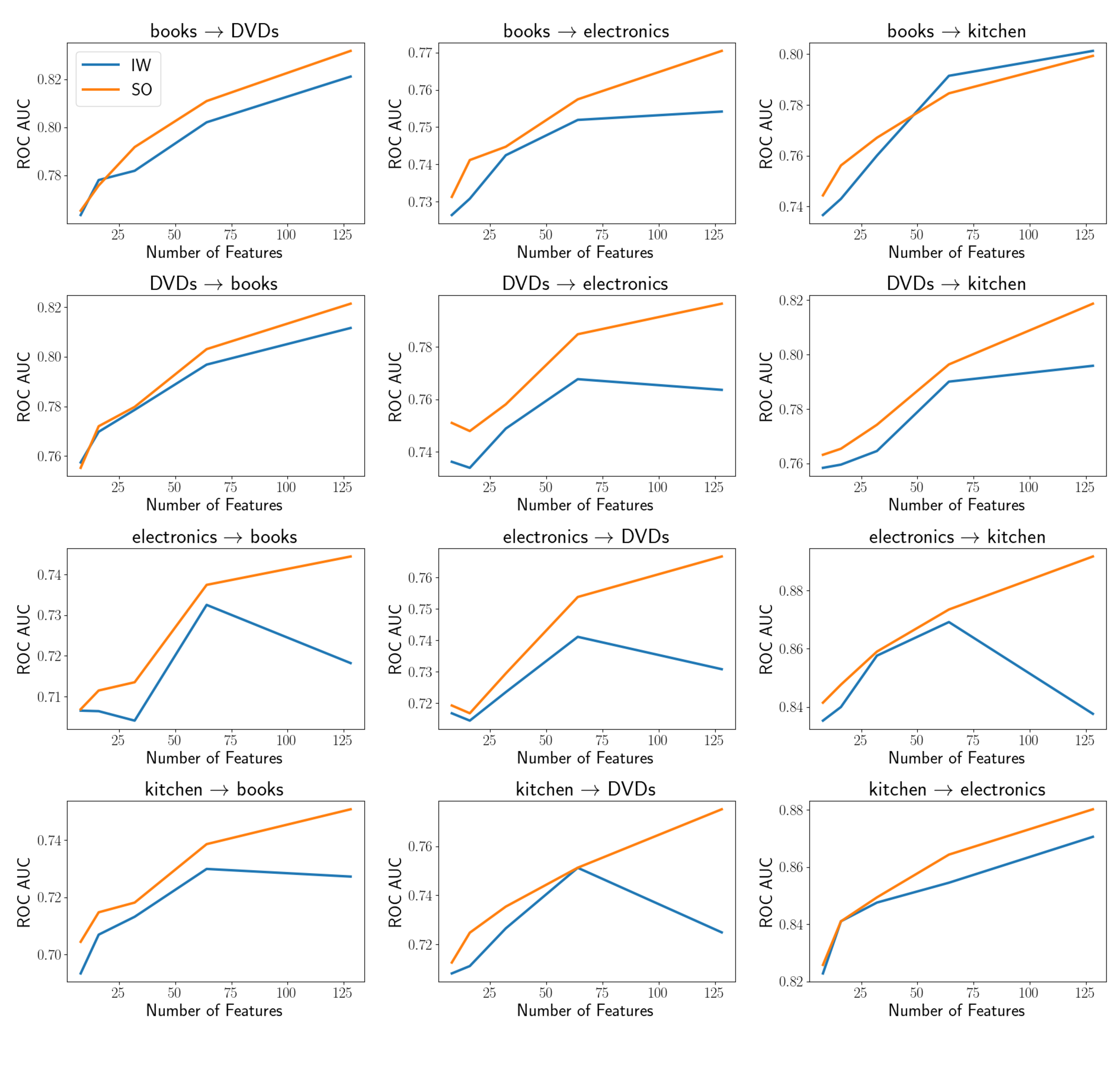}} 
        \caption{Results of Section \ref{Apdx_subsection_AdditionalexperimentsExperimentsUsingRealWorldData} for 
      LogisticRegression. In the figure titles, domain names at the origin of the arrows indicate the source domains, while those at the tip represent the target domains. The $x$-axis shows the number of features, and the $y$-axis represents the ROC AUC for the domain adaptation tasks. 
      The orange line (SO) denotes models trained using source-only data (i.e., models trained on source data only, without importance weighting), whereas the blue line (IW) represents models trained using source data with importance weighting.} 
        \label{Apdx_subsection_FigLogisticRegression} 
    \end{center} 
\end{figure*} 
\begin{figure*}[t] 
    \begin{center} \centerline{\includegraphics[width=1.00\columnwidth]{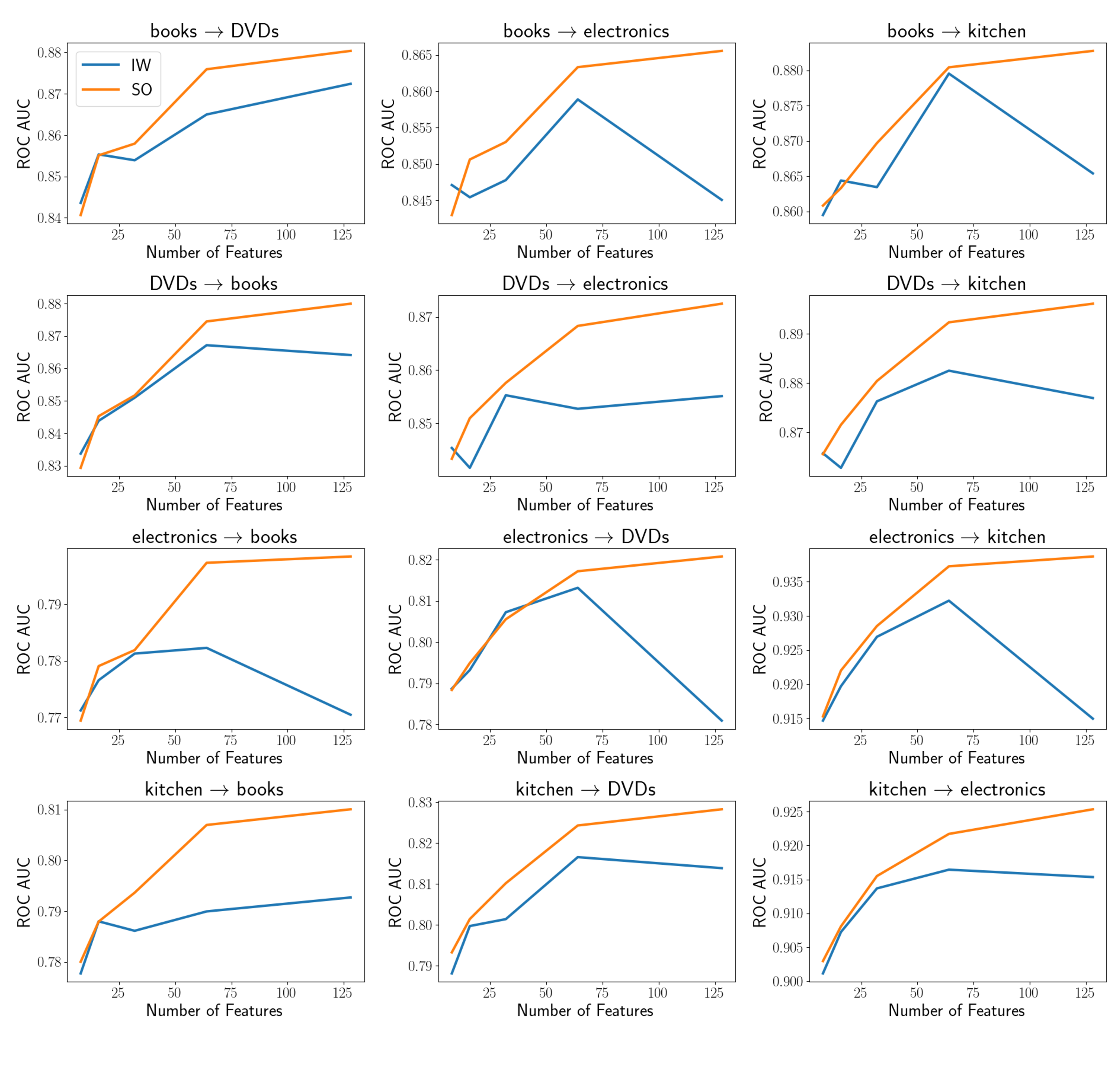}} \caption{Results of Section \ref{Apdx_subsection_AdditionalexperimentsExperimentsUsingRealWorldData}. 
            \label{Apdx_subsection_FigLightGBM} 
    Results of Section \ref{Apdx_subsection_AdditionalexperimentsExperimentsUsingRealWorldData} for LightGBM. In the figure titles, domain names at the origin of the arrows represent the source domains, while those at the tip indicate the target domains. The $x$-axis shows the number of features, and the $y$-axis represents the ROC AUC for the domain adaptation tasks. The orange line (SO) denotes models trained using source-only data (i.e., models trained on source data only, without importance weighting), whereas the blue line (IW) represents models trained using source data with importance weighting.} 
    \end{center} 
\end{figure*} 


\end{document}